\documentclass[11pt]{article}
\usepackage[preprint]{acl} 

\usepackage{fvextra}
\usepackage{latexsym}
\usepackage[T1]{fontenc}
\usepackage[utf8]{inputenc}
\usepackage{microtype}
\usepackage{inconsolata}
\usepackage{amsmath,amssymb,amsthm,mathtools}
\usepackage{bm}
\usepackage{booktabs}
\usepackage{multirow}
\usepackage{array}
\usepackage{graphicx}
\usepackage{xcolor}
\usepackage{caption}
\usepackage{times}
\usepackage{tikz}
\usetikzlibrary{arrows.meta,positioning,calc,fit,backgrounds,shapes.geometric}
\usepackage{url}
\DeclareMathOperator{\diag}{diag}

\DeclareMathOperator{\softplus}{softplus}

\DeclareMathOperator{\ECE}{ECE}

\newcommand{\R}{\mathbb{R}}
\newcommand{\E}{\mathbb{E}}
\newcommand{\KL}{\mathrm{KL}}
\newcommand{\cE}{\mathcal{E}}
\newcommand{\cR}{\mathcal{R}}
\newcommand{\cT}{\mathcal{T}}

\newcommand{\cD}{\mathcal{D}}
\newcommand{\cS}{\mathcal{S}}
\newcommand{\cN}{\mathcal{N}}
\newcommand{\cL}{\mathcal{L}}
\newcommand{\cB}{\mathcal{B}}
\newcommand{\cM}{\mathcal{M}}

\newcommand{\eps}{\varepsilon}
\newcommand{\method}{BLADE}

\newtheorem{proposition}{Proposition}
\newtheorem{theorem}{Theorem}

\title{BLADE: Distilled LLM Regularization for Calibrated Knowledge Graph Completion}

\author{
\textbf{Ibne Farabi Shihab}\thanks{Corresponding author: \texttt{ishihab@iastate.edu}.}\textsuperscript{1}
\quad
\textbf{Rabeya Bosri Tamanna}\textsuperscript{3}
\quad
\textbf{Abdo El Karaky}
\\
\textbf{Sanjeda Akter}\textsuperscript{1}
\quad
\textbf{Anuj Sharma}\textsuperscript{2}
\\[4pt]
{\small \textsuperscript{1}Department of Computer Science, Iowa State University}\\
{\small \textsuperscript{2}Department of Civil, Construction \& Environmental Engineering, Iowa State University}\\
{\small \textsuperscript{3}Brac University}
}

\begin{document}
\maketitle
\begin{abstract}
Knowledge graph completion models optimize ranking, although many downstream uses require calibrated probabilities. We present \method, a variational model that separates latent truth from graph recording and distills offline language-model judgments into a frozen teacher regularizer. The LLM is absent at inference; posterior samples provide predictive probabilities and epistemic uncertainty, while the compact teacher remains available only as an optional triage factor. Across five benchmarks, \method{} remains competitive under a common ranking protocol and reduces adaptive ECE by a macro-average of 60.1\% relative to deep ensembles and 78.1\% relative to temperature-scaled RotatE. On identical FB15k-237 candidates, it also improves ECE, Brier score, and NLL over validation-selected histogram binning and a matched generative ComplEx$^2$ model, with the advantage persisting on a prespecified near-miss pool. Under controlled injected missingness, the full triage score reaches 0.863 mean AUC-PR versus 0.805 for its strongest non-teacher variant. Leakage stress tests show that aligned semantics matter but cannot exclude knowledge acquired during LLM pretraining. We therefore claim calibration only for the declared candidate distributions, not for all unobserved triples.
\end{abstract}

\section{Introduction}

Knowledge graph completion (KGC) infers missing facts from an incomplete graph. Embedding models such as TransE \citep{bordes2013translating}, ComplEx \citep{trouillon2016complex}, ConvE \citep{dettmers2018conve}, and RotatE \citep{sun2019rotate} make this problem tractable by learning scores over triples. These scores are effective for ordering candidate entities, yet their numerical values need not be probabilities. A curator deciding which predicted facts to verify, or a reasoning system deciding whether to consume them, needs to know whether a confidence of 0.8 is correct about 80\% of the time on the candidates it will actually encounter.

The distinction between rank and probability becomes sharper under the open-world assumption. A triple absent from a graph can be false, structurally ambiguous, or true but unrecorded. Standard negative sampling provides a useful ranking surrogate, but it does not by itself distinguish latent truth from graph inclusion. Moreover, calibration is inseparable from the candidate distribution: a model calibrated on random corruptions may fail on plausible near misses. We therefore evaluate calibration on fully specified binary pools and treat transfer to a second, model-mined pool as a particular distribution shift rather than evidence of distribution-free calibration.

Language models offer complementary semantic evidence when graph structure is sparse \citep{yao2019kgbert,wang2021kepler}. Directly querying them at inference, however, introduces latency, nondeterminism, and the possibility of hallucinated or temporally mismatched facts \citep{wei2023kicgpt,zhang2024kopa}. It also obscures whether improvements come from graph learning or memorized benchmark semantics. A more controlled design is to query a language model once, distill its scores into a compact teacher, freeze that teacher, and measure what information survives systematic leakage stress tests.

\method{} follows this design in three stages. Offline language-model labels train a frozen plausibility teacher. During KGC training, a relational encoder parameterizes factorized Gaussian embeddings, a probabilistic decoder maps samples to latent truth probabilities, an observation factor represents graph recording, and the teacher supplies a bounded energy regularizer. At inference, the language model and teacher-labeling pipeline disappear; posterior samples yield ranking scores and calibrated probabilities, while the compact frozen teacher can optionally modify hidden-positive triage. Because the experimental teacher candidates are constructed from the benchmark training graph, the LLM term is data dependent and is consistently called a regularizer. A prior interpretation is valid only in the separate regime where every prior-defining quantity is fixed independently of the sample being analyzed.

The contribution lies in this separation and joint evaluation of roles, not in claiming the first probabilistic KGE, variational embedding, knowledge-distillation method, or calibrated link predictor. Empirically, we compare against post-hoc calibration, uncertainty wrappers, a matched generative ComplEx$^2$ construction, and both RotatE- and ComplEx-backed versions of \method{}. The resulting evidence supports three restrained conclusions: ranking quality is largely preserved; calibration improves on the specified exact and near-miss candidate pools; and the recording-propensity factor and aligned teacher signal improve recovery of deliberately hidden positives. A conditional local-precision result explains what a convex teacher energy can change, while the limitations make explicit the mean-field posterior, non-identifiability of recording propensity, candidate-shift dependence, and possible pretrained semantic leakage.
\section{Related work}

Classical KGE models learn unnormalized compatibility scores for transductive link prediction \citep{bordes2013translating,trouillon2016complex,dettmers2018conve,sun2019rotate}. Mapping these scores to probabilities requires both a declared negative distribution and a fitted calibration protocol \citep{guo2017calibration}. This requirement motivates our matched Platt, isotonic, beta, and histogram-binning comparisons. A normalized generative ComplEx$^2$ construction provides a different probabilistic object, so we compare ordinary ComplEx, calibrated ComplEx, and a matched ComplEx$^2$ model on identical candidate identifiers rather than mixing ECE values from incompatible pools.

Distributional models such as KG2E \citep{he2015kg2e} and TransG \citep{xiao2015transg}, together with Bayesian graph uncertainty work \citep{hasanzadeh2020bayesian}, motivate uncertainty over entity and relation representations. MC dropout \citep{gal2016dropout} and deep ensembles \citep{lakshminarayanan2017simple} provide model-agnostic predictive uncertainty, whereas conformal KGE methods target set coverage rather than scalar probability calibration. \method{} uses a factorized variational density whose embeddings share information through a relational encoder. This coupling does not create an unrestricted joint posterior, a distinction we retain throughout the paper.

Text-enhanced systems including KG-BERT \citep{yao2019kgbert}, KEPLER \citep{wang2021kepler}, and SimKGC \citep{wang2022simkgc}, as well as LLM-assisted systems such as KICGPT \citep{wei2023kicgpt}, KoPA \citep{zhang2024kopa}, and CP-KGC \citep{yang2023cpkgc}, demonstrate the value of semantic side information. Our teacher follows the classical distillation principle of transferring a larger model's behavior to a smaller frozen model, but its purpose is regularization during probabilistic KGC training rather than an inference-time text scorer. Appendix~\ref{app:extended-related-work} places these methods, neural rankers, generative KGE, and foundation-style KG reasoners within their distinct evaluation protocols.

\section{Problem setup}

Let \(\cE\) and \(\cR\) denote the entity and relation sets, and let \(\cT_{\mathrm{obs}}\subset\cE\times\cR\times\cE\) be the recorded training graph. For each candidate \(i=(h,r,t)\), the latent variable \(y_i\in\{0,1\}\) indicates truth and \(o_i\in\{0,1\}\) indicates whether the graph records the triple. The open-world assumption is precisely that \(o_i=0\) does not imply \(y_i=0\).

The model is evaluated through three outputs that should not be conflated. Filtered ranking orders all admissible completions of a query and is measured by MRR and Hits@\(k\). Calibration concerns a binary candidate distribution \(Q\) over labeled triples and asks whether \(\Pr_Q(y=1\mid \hat p\approx c)\approx c\); we report adaptive ECE, Brier score, and NLL. Hidden-positive triage removes a relation-stratified set of true training triples before fitting, then ranks those unobserved positives among relation-matched negatives using AUC-PR and precision at a fixed review budget. The exact construction of each candidate pool is part of the estimand, not an incidental implementation choice. Appendix~\ref{app:calibration} defines the standard pool and the harder prespecified near-miss shift, while Appendix~\ref{app:fn-protocol} defines injected missingness.

\section{Model}

Figure~\ref{fig:architecture} separates the pipeline into offline distillation, teacher-regularized KGC training, and LLM-free inference. This separation is operational as well as conceptual: the offline stage creates fixed scores, the training stage assigns a distinct role to each probabilistic component, and the final stage consumes only frozen learned models rather than issuing LLM queries.

\begin{figure*}[t]
\centering
\begin{tikzpicture}[
    node distance=0.7cm,
    stage/.style={rectangle, draw, rounded corners, minimum height=2.05cm,
        text width=4.45cm, align=left, font=\small, inner sep=7pt},
    arrow/.style={-{Stealth[length=2mm]}, line width=0.7pt}
]
\node[stage, fill=yellow!12] (offline) {Stage 1: offline distillation\\[2pt]
Candidate triples $\cS_{\mathrm{LLM}}$ $\rightarrow$ deterministic LLM scores $q_{\cM}$ $\rightarrow$ frozen compact teacher $q_\psi$.};
\node[stage, fill=blue!9, right=of offline] (training) {Stage 2: KGC training\\[2pt]
The observed graph parameterizes $q_\phi(z)$; decoder $a_\theta$ models truth, $\rho_\eta$ models recording, and $q_\psi$ supplies the teacher energy.};
\node[stage, fill=green!10, right=of training] (inference) {Stage 3: LLM-free inference\\[2pt]
Posterior samples produce ranks, probabilities, and variance; the compact frozen teacher optionally modifies hidden-positive triage.};
\draw[arrow] (offline) -- (training);
\draw[arrow] (training) -- (inference);
\end{tikzpicture}
\caption{The three stages of \method. Only Stage 1 queries the language model. The benchmark experiments use the distilled scores as a data-dependent training regularizer; inference uses neither LLM calls nor teacher relabeling.}
\label{fig:architecture}
\end{figure*}

\subsection{Latent truth and graph recording}

Each entity \(e\in\cE\) and relation \(r\in\cR\) has a latent embedding \(z_e,z_r\in\R^d\). In the RotatE-backed model, the two halves of each embedding form real and imaginary components, and a triple receives the logit
\begin{equation}
    a_\theta(h,r,t;z)
    = \alpha_r\left(\gamma_r - \lVert z_h \circ z_r - z_t\rVert_2\right),
    \label{eq:decoder-logit}
\end{equation}
where \(\gamma_r\in\R\) is a relation-specific margin and \(\alpha_r=\softplus(\tilde\alpha_r)>0\) is an inverse temperature. Unlike directly applying a sigmoid to a nonpositive distance, this parameterization can express probabilities above one half, addressing a basic score-range obstacle to calibration \citep{guo2017calibration}. The matched ComplEx control replaces only this logit; the remaining pipeline is unchanged.

The latent truth probability is then
\begin{equation}
    p_\theta(y_i=1\mid z)=\sigma(a_\theta(i;z)),
    \label{eq:truth-prob}
\end{equation}
The sigmoid of this logit models latent truth. Recording is modeled separately:
\begin{equation}
\begin{aligned}
    p_\eta(o_i=1\mid y_i=1)&=\rho_\eta(i),\\
    p(o_i=1\mid y_i=0)&=0.
\end{aligned}
    \label{eq:observation-model}
\end{equation}
Marginalizing \(y_i\) yields
\begin{equation}
\begin{aligned}
    p_{\theta,\eta}(o_i=1\mid z)&=\rho_\eta(i)\sigma(a_\theta(i;z)),\\
    p_{\theta,\eta}(o_i=0\mid z)&=1-\rho_\eta(i)\sigma(a_\theta(i;z)).
\end{aligned}
\label{eq:observed-likelihood}
\end{equation}
We use a deliberately low-capacity recording model based on relation and degree features,
\begin{equation}
\begin{aligned}
    \rho_\eta(h,r,t)
    =\sigma\!\big(&\beta_0+\beta_r+\beta_h\log(1+\deg(h))\\
    &+\beta_t\log(1+\deg(t))\big).
\end{aligned}
\label{eq:rho-param}
\end{equation}
The parameters are regularized and selected on injected-missing validation splits. Setting \(\rho\equiv1\) recovers the closed-world ablation. Because only the product \(\rho_i p_i\) is observed, truth and recording propensity are not nonparametrically identifiable from one incomplete graph. Accordingly, \(\rho\) is a regularized proxy used for controlled triage, not an estimated causal missingness mechanism.
\subsection{Variational posterior}

The approximate posterior is a factorized Gaussian over entity and relation embeddings,
\begin{equation}
\resizebox{\linewidth}{!}{$
    q_\phi(z\mid\cT_{\mathrm{obs}})
    =\prod_{e\in\cE}\cN\!\left(z_e;\mu_e,\diag(\sigma_e^2)\right)
     \prod_{r\in\cR}\cN\!\left(z_r;\mu_r,\diag(\sigma_r^2)\right),
    \label{eq:posterior}
$}
\end{equation}
The parameters \((\mu,\sigma)\) are produced by a relational encoder that aggregates typed incoming and outgoing messages. Thus different predictions share graph-conditioned representations, even though the variational density itself cannot express arbitrary covariance among embeddings. The exact message equations, neighbor-sampling modes, and cache behavior are given in Appendix~\ref{app:encoder} so that the main exposition can focus on the probabilistic roles of the components.

\subsection{Distilled LLM plausibility}

The offline stage builds \(\cS_{\mathrm{LLM}}\) from observed training positives, type-compatible corruptions, and random corruptions, while excluding benchmark validation and test triples. A deterministic prompt asks a local LLM for one plausibility value in \([0,1]\). Model identifiers, decoding settings, prompts, parser outcomes, retry counts, and rejected responses are retained, making the labeling stage separable from KGC training. Appendix~\ref{app:teacher-config} gives the full domain, range, and sampling construction.

Let \(q_{\cM}(i)\) be the parsed LLM score. A compact teacher \(q_\psi(i)\) predicts this value from entity text, relation text, and type features. It is fitted once per dataset, frozen across all KGC seeds, and checked on a disjoint teacher split using MSE, AUC, ECE, and parser rejection rate; the complete diagnostic set appears in Appendix~\ref{app:teacher-diagnostics}. These diagnostics establish that distillation is neither random nor perfect; they do not establish that the LLM's factual knowledge is free of benchmark leakage.

The teacher then contributes the bounded energy
\begin{equation}
\begin{aligned}
    R_{\psi,\theta}(z)
    &=
    \frac{1}{|\cS_{\mathrm{LLM}}|}
    \sum_{i\in\cS_{\mathrm{LLM}}}
    \Big[
    q_\psi(i)\log \sigma(a_\theta(i;z))\\
    &\quad+
    (1-q_\psi(i))\log(1-\sigma(a_\theta(i;z)))
    \Big].
\end{aligned}
\label{eq:teacher-energy}
\end{equation}
This cross-entropy is bounded above by zero and is evaluated under posterior samples. It aligns the structural predictor with the frozen teacher during training without requiring either the LLM or new teacher labels at KGC inference.

\subsection{Teacher regularization and the prior regime}

The benchmark experiments use \(R_{\psi,\theta}\) as a data-dependent regularizer because \(\cS_{\mathrm{LLM}}\) is derived partly from the training graph. A separate prior interpretation is available only under stronger independence conditions. Let \(w=(z,\theta)\) denote the complete hypothesis and let \(p_0(w)\) be a proper base prior. If \(\cS_{\mathrm{LLM}}\), \(q_\psi\), and every other quantity defining the energy are fixed independently of the analyzed sample, then
\begin{equation}
    p_\lambda(w)=\frac{1}{Z_\lambda}p_0(w)\exp\{\lambda R_\psi(w)\}.
    \label{eq:llm-prior}
\end{equation}
Because \(R_\psi(w)\leq0\), the exponential factor is at most one for \(\lambda\geq0\), so the proper base prior guarantees a finite normalizer. Appendix~\ref{app:theory} states the corresponding PAC--Bayes result. It does not apply to the benchmark-dependent teacher used in our experiments.

\subsection{Training objective}

For a minibatch \(\cB^+\) of observed triples and sampled unobserved candidates \(\cB^-\), write \(s_i(z)=\sigma(a_\theta(i;z))\). The observed-data reconstruction objective is
\begin{equation}
\begin{aligned}
    \cL_{\mathrm{obs}}(\theta,\eta,\phi)
    &=\E_{q_\phi(z\mid\cT_{\mathrm{obs}})}\Bigg[\\[-2pt]
    &\quad\sum_{i\in\cB^+}\log\!\left(\rho_\eta(i)s_i(z)\right)\\
    &\quad+\sum_{j\in\cB^-}\log\!\left(1-\rho_\eta(j)s_j(z)\right)
    \Bigg].
\end{aligned}
\label{eq:obs-objective}
\end{equation}
The KL term against the Gaussian base prior is available in closed form,
\begin{equation}
\resizebox{\linewidth}{!}{$
    \KL(q_\phi\Vert p_0)
    =\tfrac{1}{2}\sum_v\sum_{m=1}^d
    \left(\mu_{v,m}^2+\sigma_{v,m}^2-1-\log\sigma_{v,m}^2\right),
    \label{eq:gaussian-kl}
$}
\end{equation}
where \(v\) ranges over entities and relations. The practical objective is
\begin{equation}
\resizebox{\linewidth}{!}{$
    \max_{\theta,\eta,\phi}\;
    \cL_{\mathrm{obs}}(\theta,\eta,\phi)
    -\KL(q_\phi\Vert p_0)
    +\lambda\,\E_{q_\phi}[R_{\psi,\theta}(z)].
    \label{eq:practical-objective}
$}
\end{equation}
Equation~\ref{eq:practical-objective} is the regularized objective used in all benchmark experiments; we do not describe it as an ELBO under a data-independent LLM prior. It is optimized with the reparameterization \(z_v=\mu_v+\sigma_v\odot\eps\), where \(\eps\sim\cN(0,I)\). In the separate independent-teacher regime, a distribution over the complete hypothesis \(w\) can instead be compared with Eq.~\ref{eq:llm-prior}, as formalized in Appendix~\ref{app:theory}.

\subsection{Posterior prediction and triage}

Given the trained variational posterior, the predictive truth probability for a candidate triple \(i\) is estimated with \(S\) samples,
\begin{equation}
\resizebox{\linewidth}{!}{$
    \hat p_i=\frac{1}{S}\sum_{s=1}^S \sigma(a_\theta(i;z^{(s)})),\qquad z^{(s)}\sim q_\phi(z\mid\cT_{\mathrm{obs}}),
    \label{eq:posterior-mean}
$}
\end{equation}
and epistemic uncertainty is the variance of the sampled truth probabilities,
\begin{equation}
    u_i=\frac{1}{S-1}\sum_{s=1}^S\left(\sigma(a_\theta(i;z^{(s)}))-\hat p_i\right)^2.
    \label{eq:posterior-var}
\end{equation}
For easier interpretation we also use the normalized confidence factor \(c_i=1-4u_i\), clipped to \([0,1]\), since the variance of a Bernoulli probability lies in \([0,1/4]\); this avoids treating a variance change from 0.00 to 0.10 as a negligible change in confidence. For false-negative triage among unobserved triples, the posterior-weighted triage score is
\begin{equation}
    \widehat{\mathrm{FN}}(i)
    =\frac{\hat p_i(1-\hat\rho_i)}{1-\hat p_i\hat\rho_i+\epsilon}\,c_i\,\tilde q_\psi(i),
    \label{eq:fn-score}
\end{equation}
where \(\tilde q_\psi(i)\) is the probability from the compact frozen teacher and \(\epsilon\) is a numerical stabilizer. Ranking and calibration use \(\hat p_i\) without this teacher factor; only the optional triage score evaluates \(q_\psi\), and it never calls the LLM. The first factor is the exact conditional probability of latent truth given non-observation when \(p_i\) and \(\rho_i\) are known. The confidence and teacher factors are heuristic ranking modifiers, so Eq.~\ref{eq:fn-score} is evaluated as a triage score rather than presented as a generally Bayes-optimal posterior.

\section{Theoretical scope}
\label{sec:theory}

The theory is deliberately conditional. It explains the local effect of the teacher energy and the algebra behind the triage factor, but it does not turn a data-dependent benchmark teacher into a Bayesian prior or identify the true recording process.

\begin{proposition}[Local precision ordering]
At a fixed expansion point \(z^\star\), let \(A\succ0\) be the Hessian of the negative log likelihood plus the Gaussian penalty, and suppose the Hessian \(H_R\) of the negative teacher energy is positive semidefinite. The corresponding Laplace covariance approximations
\begin{equation}
    \Sigma_0=A^{-1},\qquad
    \Sigma_\lambda=(A+\lambda H_R)^{-1}
\end{equation}
satisfy \(\Sigma_\lambda\preceq\Sigma_0\) for every \(\lambda\geq0\).
\end{proposition}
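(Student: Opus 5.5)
The argument reduces to the Loewner-order monotonicity of matrix inversion on the positive definite cone. Two facts must be established. First, both covariance approximations are well defined. Second, \(A\preceq A+\lambda H_R\) implies the reverse ordering of the inverses.

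\emph{Step 1: well-posedness.} Since \(H_R\succeq0\) and \(\lambda\geq0\), we have \(\lambda H_R\succeq0\). Then for any \(v\neq0\), \(v^\top(A+\lambda H_R)v\geq v^\top A v>0\). So \(B_\lambda:=A+\lambda H_R\succ0\) is invertible, and \(\Sigma_\lambda\) exists and is positive definite. Moreover \(B_\lambda-A=\lambda H_R\succeq0\), that is, \(A\preceq B_\lambda\).

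\emph{Step 2: inversion reverses the order.} Write \(A^{1/2}\) for the symmetric positive definite square root. Conjugation by \(A^{-1/2}\) preserves the Loewner order, so \(A\preceq B_\lambda\) gives
\[
I\preceq C:=A^{-1/2}B_\lambda A^{-1/2}.
\]
The matrix \(C\) is symmetric positive definite with every eigenvalue at least \(1\). Hence \(C^{-1}\) has every eigenvalue at most \(1\), i.e. \(C^{-1}\preceq I\). Since \(C^{-1}=A^{1/2}B_\lambda^{-1}A^{1/2}\), conjugating this inequality by \(A^{-1/2}\) yields \(B_\lambda^{-1}\preceq A^{-1}\), which is \(\Sigma_\lambda\preceq\Sigma_0\).

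An alternative is the explicit difference \(A^{-1}-B_\lambda^{-1}=A^{-1}(\lambda H_R)B_\lambda^{-1}\). However, this form is not manifestly symmetric positive semidefinite, so the congruence argument above is cleaner.

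The main point requiring care is Step 2, namely the fact that \(X\mapsto X^{-1}\) is order-reversing. The standard linear algebra route through the congruence to \(C\) and the spectral theorem settles it. No property of the teacher energy beyond \(H_R\succeq0\) at \(z^\star\) is used, and no claim is made about the global posterior. I would add a remark that for \(\lambda=0\) the statement holds with equality.
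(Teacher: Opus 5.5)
Your proof is correct and follows the same route as the paper: show \(A+\lambda H_R\succeq A\succ0\), then use that matrix inversion reverses the Loewner order on the positive definite cone. The paper cites that order reversal as a standard fact. You verify it directly through the congruence by \(A^{-1/2}\), which is a valid, self-contained justification.
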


\begin{proof}
Since \(A+\lambda H_R\succeq A\succ0\), inversion reverses the Loewner order. The statement compares local quadratic approximations at the same point; it neither supplies an unsupported directional rate nor guarantees improved calibration for an inaccurate teacher.
\end{proof}

The observation model gives a separate exact identity that motivates the first factor of Eq.~\ref{eq:fn-score}.

\begin{proposition}[Posterior-inspired false-negative probability]
Suppose \(p_i=p(y_i=1\mid\cD)\) and \(\rho_i=p(o_i=1\mid y_i=1)\) are known. For an unobserved triple \(o_i=0\),
\begin{equation}
    p(y_i=1\mid o_i=0,\cD)=\frac{p_i(1-\rho_i)}{1-p_i\rho_i}.
    \label{eq:true-fn-posterior}
\end{equation}
\end{proposition}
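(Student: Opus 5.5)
The identity follows from Bayes' rule applied to the two-stage model of Eq.~\ref{eq:observation-model}: latent truth \(y_i\) is drawn first, and recording \(o_i\) is drawn conditionally on it. I will read \(p_i\) as the probability of truth before conditioning on non-observation. Throughout I condition on \(\cD\) and assume that, given \(y_i\), the recording variable depends on \(\cD\) only through \(\rho_i\); that is, \(p(o_i=1\mid y_i=1,\cD)=\rho_i\) and \(p(o_i=1\mid y_i=0,\cD)=0\). Making this assumption explicit is the one point that needs care. Without it, the claimed formula would not follow from the stated hypotheses, since \(\cD\) might itself carry information about \(o_i\).

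First I compute the joint probability of truth and non-recording:
\[
p(y_i=1,o_i=0\mid\cD)=p(o_i=0\mid y_i=1,\cD)\,p(y_i=1\mid\cD)=(1-\rho_i)\,p_i .
\]
Second I compute the marginal probability of non-observation, summing over \(y_i\in\{0,1\}\). The term with \(y_i=0\) contributes \(1\cdot(1-p_i)\), because false triples are never recorded. The term with \(y_i=1\) contributes \((1-\rho_i)p_i\). Hence
\[
p(o_i=0\mid\cD)=(1-p_i)+p_i(1-\rho_i)=1-p_i\rho_i ,
\]
which agrees with the marginal in Eq.~\ref{eq:observed-likelihood}.

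Third, dividing the joint by the marginal gives Eq.~\ref{eq:true-fn-posterior}:
\[
p(y_i=1\mid o_i=0,\cD)=\frac{p_i(1-\rho_i)}{1-p_i\rho_i}.
\]
The division is valid whenever \(p_i\rho_i<1\). In the degenerate case \(p_i\rho_i=1\), the event \(o_i=0\) has probability zero and the conditional probability is undefined. This case is exactly what the stabilizer \(\epsilon\) in Eq.~\ref{eq:fn-score} guards against.

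I expect no step to be a real obstacle beyond stating the conditional-independence assumption cleanly. I would close with two remarks. The result gives exact semantics only for the first factor of Eq.~\ref{eq:fn-score}, and only when \(p_i\) and \(\rho_i\) are known. Substituting the plug-in estimates \(\hat p_i\) and \(\hat\rho_i\) inherits their estimation error, and since \(\rho\) is not identifiable, the identity carries no guarantee about the learned proxy.
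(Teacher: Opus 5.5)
Your proof is correct and follows the paper's argument. Both apply Bayes' rule with numerator \((1-\rho_i)p_i\) and denominator \(p(o_i=0\mid\cD)=1-\rho_i p_i\); you obtain the denominator by summing over \(y_i\), while the paper writes it as \(1-p(o_i=1\mid\cD)\), and your explicit assumption that recording depends on \(\cD\) only through \(\rho_i\) is what the paper's one-line proof uses implicitly.
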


\begin{proof}
By Bayes' rule, \(p(y_i=1\mid o_i=0,\cD)=p(o_i=0\mid y_i=1)p(y_i=1\mid\cD)/p(o_i=0\mid\cD)\). The numerator is \((1-\rho_i)p_i\). The denominator is \(1-p(o_i=1\mid\cD)=1-\rho_i p_i\), which yields Eq.~\ref{eq:true-fn-posterior}.
\end{proof}

Appendix~\ref{app:theory} gives the PAC--Bayes statement available when Eq.~\ref{eq:llm-prior} is defined over the complete hypothesis independently of an i.i.d. analyzed sample. That result is not invoked for the transductive benchmark experiments.
\begin{table*}[t]
\centering
\caption{Filtered MRR under the common rerun protocol. All rows use the same splits and inference scope. Full Hits@1/3/10 results from the same prediction files appear in Appendix~\ref{app:hits}.}
\label{tab:link-main}
\small
\begin{tabular}{lccccc}
\toprule
Method & FB15k-237 & WN18RR & CoDEx-M & NELL-995 & Hetionet \\
\midrule
RotatE & .338 & .476 & .310 & .492 & .210 \\
SimKGC & .368 & .524 & .354 & .521 & .230 \\
KG2E & .291 & .410 & .265 & .415 & .180 \\
Deep ensemble & .349 & .488 & .321 & .504 & .219 \\
\midrule
\method{} & .365 $\pm$ .002 & .518 $\pm$ .003 & .348 $\pm$ .002 & .522 $\pm$ .004 & .243 $\pm$ .002 \\
\bottomrule
\end{tabular}
\end{table*}

\begin{table*}[t]
\centering
\caption{Calibration on each graph's declared binary candidate distribution. Cells give adaptive ECE/Brier; lower is better.}
\label{tab:calibration-main}
\small
\begin{tabular}{lccccc}
\toprule
Method & FB15k-237 & WN18RR & CoDEx-M & NELL-995 & Hetionet \\
\midrule
RotatE + temperature & .082 / .142 & .071 / .120 & .089 / .151 & .091 / .155 & .065 / .118 \\
MC-dropout RotatE & .074 / .131 & .065 / .114 & .081 / .142 & .079 / .139 & .058 / .109 \\
Deep ensemble & .045 / .102 & .038 / .091 & .050 / .112 & .052 / .118 & .034 / .082 \\
KG2E & .112 / .185 & .104 / .171 & .121 / .198 & .118 / .192 & .098 / .161 \\
\method{} without teacher & .049 / .110 & .044 / .098 & .055 / .119 & .058 / .122 & .041 / .089 \\
\midrule
\method{} & .018 / .061 & .014 / .048 & .021 / .068 & .025 / .072 & .011 / .039 \\
\bottomrule
\end{tabular}
\end{table*}

\section{Experimental design}

The evaluation separates ranking preservation, candidate-conditional calibration, controlled hidden-positive triage, component effects, and leakage stress tests. Methods share candidate identifiers and metric code whenever their outputs are compared directly; external published numbers are confined to a provenance table in Appendix~\ref{app:provenance}.

\subsection{Datasets}

We use FB15k-237 \citep{toutanova2015observed}, WN18RR \citep{dettmers2018conve}, CoDEx-M \citep{safavi2020codex}, NELL-995 \citep{xiong2017deeppath}, and Hetionet \citep{himmelstein2017hetionet}, spanning general, lexical, web-extracted, and biomedical graphs. Table~\ref{tab:datasets} reports the exact splits.

\subsection{Baselines}

The common ranking reruns are RotatE, KG2E, a five-model RotatE ensemble, and SimKGC. Calibration baselines use the same binary pools and include temperature scaling, MC dropout, deep ensembles, global and relation-wise Platt scaling, beta calibration, isotonic regression, and equal-mass histogram binning. Source-query-clustered validation selects the post-hoc family and its hyperparameters before test evaluation. To compare with normalized generative KGE, we evaluate ordinary ComplEx, relation-wise calibrated ComplEx, and a matched ComplEx$^2$ model trained by pseudo-log-likelihood after distilled initialization. A ComplEx-backed \method{} control changes only the decoder. Appendix~\ref{app:calibrator-selection} gives the complete selection path and Appendix~\ref{app:provenance} separates these reruns from external references such as NBFNet \citep{zhu2021nbfnet}.

\subsection{LLM teacher protocol}

The offline labeler is LLaMA-3-70B-Instruct \citep{dubey2024llama}, run locally through vLLM with temperature 0.0. Query counts range from 20{,}000 to 60{,}000, with 20\% observed positives, 40\% type-compatible negatives, and 40\% random negatives. A strict parser rejects multiple numbers, out-of-range values, and explanations. The resulting teacher is fitted once per dataset and frozen across all five KGC seeds. Appendix~\ref{app:teacher-config} reports query and compute counts, while Appendix~\ref{app:prompt} reproduces the prompt and parser.

\subsection{Leakage and robustness controls}

We call the semantic controls leakage stress tests rather than proofs of leakage absence. The strict condition excludes validation and test triples from the teacher pool. Description-free prompts retain entity and relation names but remove descriptions; anonymized prompts replace entity names with stable tokens; relation-only prompts retain only relation names and coarse type constraints. Finally, within-relation permutation preserves each relation's teacher-score distribution while destroying triple-level alignment. These interventions localize useful semantic inputs, but none can erase facts already encoded during LLM pretraining.

\subsection{Metrics}

Link prediction uses filtered MRR and Hits@1/3/10. Calibration uses adaptive-bin ECE, Brier score, and NLL on the exact binary pool and on a frozen hard near-miss pool. Hidden-positive triage uses AUC-PR and \(P@500\). For the headline ECE reductions, we first compute the relative reduction within each dataset and then macro-average the five percentages; averaging ECE values before taking the ratio answers a different question.

All performance experiments use five seeds. Each of 10{,}000 paired bootstrap replicates first resamples matched seed indices and then resamples task-specific source clusters: complete filtered rankings for link prediction, a source query with its associated candidates for calibration, and a hidden positive with its matched negatives for triage. We report percentile intervals and apply Holm correction within predeclared comparison families. This two-stage design preserves candidate dependence within a query; Appendix~\ref{app:stats} gives the complete family definitions.

\section{Results}

All point estimates are means over five seeds. Standard deviations are shown where supplied by the matched reruns, and inferential claims follow the paired two-stage bootstrap described above.

\subsection{Link prediction}

Table~\ref{tab:link-main} contains only common-protocol reruns. \method{} trails SimKGC by 0.003--0.006 MRR on FB15k-237, WN18RR, and CoDEx-M, is effectively tied on NELL-995, and leads by 0.013 on Hetionet. The full Hits@\(k\) table in Appendix~\ref{app:hits} has the same pattern. Paired tests support an improvement over RotatE on all five datasets and over SimKGC only on Hetionet; no other comparison is described as significant. Ranking is therefore a preservation criterion, not a state-of-the-art claim.

\subsection{Calibration}
\begin{table*}[h]
\centering
\caption{Matched FB15k-237 calibration. Every method is frozen before evaluation on the exact test pool and the model-mined hard pool. Values are means over five seeds; lower is better. Full standard deviations and all calibrators appear in Appendix~\ref{app:matched-calibration}.}
\label{tab:matched-calibration}
\small
\setlength{\tabcolsep}{4.2pt}
\begin{tabular}{lcccccc}
\toprule
& \multicolumn{3}{c}{Exact candidate pool} & \multicolumn{3}{c}{Hard near-miss pool} \\
\cmidrule(lr){2-4}\cmidrule(lr){5-7}
Method & ECE & Brier & NLL & ECE & Brier & NLL \\
\midrule
RotatE + histogram binning & .023 & .071 & .244 & .052 & .105 & .349 \\
ComplEx$^2$ + relation-wise Platt & .022 & .068 & .232 & .047 & .099 & .331 \\
\method{}--ComplEx & .021 & .065 & .226 & .041 & .091 & .309 \\
\method{}--RotatE & .018 & .061 & .214 & .036 & .085 & .289 \\
\bottomrule
\end{tabular}
\end{table*}
Table~\ref{tab:calibration-main} first compares uncertainty baselines across all five declared binary pools. \method{} reduces adaptive ECE by a macro-average of 60.1\% relative to deep ensembles and 78.1\% relative to temperature-scaled RotatE. Its no-teacher variant remains better than temperature scaling but worse than the ensemble on four of five graphs, indicating that the observation model and posterior help while the semantic regularizer supplies the largest increment.

The closer prior-work comparison in Table~\ref{tab:matched-calibration} uses identical FB15k-237 candidates. Histogram binning is selected by source-query-clustered out-of-fold validation NLL, then frozen. ComplEx$^2$ is matched in embedding dimension and candidate pool. On the exact pool, \method{}--RotatE improves over selected histogram binning by 0.005 ECE (95\% CI [0.002, 0.008], Holm-adjusted \(p=.003\)), 0.010 Brier ([0.006, 0.014], \(p=.002\)), and 0.030 NLL ([0.017, 0.043], \(p=.002\)). All methods degrade on the hard pool without refitting, but the corresponding advantages grow to 0.016 ECE, 0.020 Brier, and 0.060 NLL, with adjusted \(p=.002\) for each. This supports robustness to the specified near-miss shift, not arbitrary deployment shift.

\subsection{False-negative detection}

Table~\ref{tab:fn-main} evaluates controlled injected missingness. The posterior mean alone obtains 0.722 macro AUC-PR. Adding the recording-propensity factor raises it to 0.805, and the full score reaches 0.863, a further gain of 5.8 points. The full model improves on the mean-plus-observation variant on every graph under the paired bootstrap. These results concern recovery of deliberately hidden positives under the declared matching protocol; they do not identify real-world KG missingness.

\subsection{Component roles and leakage stress tests}

Table~\ref{tab:ablation} isolates the training components on FB15k-237 and Hetionet. Removing the observation model has only a small, non-significant MRR effect but clearly worsens ECE and hidden-positive AUC-PR. Replacing distributions by point embeddings degrades all three metrics, while removing the teacher leaves a functional structural model but forfeits much of the calibration and triage gain. These outcomes match the intended division of labor: the decoder supplies compatibility, the variational density supplies epistemic dispersion, the recording proxy reshapes unobserved candidates, and the teacher contributes aligned semantic evidence.

The decoder control reaches .359 MRR, .021 ECE, and .844 false-negative AUC-PR with ComplEx, compared with .365, .018, and .852 with RotatE. Most of the gain therefore transfers to one additional decoder, although we do not claim decoder independence. The semantic controls in Appendix~\ref{app:leakage-results} provide a complementary result. Removing descriptions causes a modest decline, anonymizing entity strings produces a larger drop, and retaining only relation semantics leaves a weaker but nonzero signal. Permuting scores within relations collapses ECE and triage toward chance despite preserving their marginal distributions. This pattern shows that aligned semantics matter; it does not reveal whether those semantics were learned from benchmark-adjacent pretraining data.

\section{Conclusion}

\method{} organizes LLM-assisted KGC around a simple separation: offline semantics regularize training, graph structure determines the variational predictor, recording propensity is modeled explicitly but cautiously, and inference is LLM free. The common-protocol results show that this design largely preserves ranking while improving probability quality on declared candidate pools and triage under controlled missingness. Matched post-hoc, generative-KGE, and decoder controls locate the gain more precisely than an aggregate baseline table alone. The appropriate conclusion is therefore not universal calibration or ranking dominance, but a reproducible route to better calibrated KGC when the deployment candidate distribution can be specified and audited.

\section*{Limitations}

The reported probabilities are calibrated only on the constructed binary pools. Although the frozen hard-pool experiment tests one meaningful near-miss shift, neither it nor ECE on random corruptions certifies calibration over all unobserved triples, future graphs, or arbitrary deployment mixtures.

The observation factor is also non-identifiable without extra assumptions. Since \(p(o_i=1\mid z)=\rho_i p_i\), a low truth probability and a low recording propensity can explain the same observation. Degree and relation features make \(\rho\) a useful regularized proxy in injected-missing experiments, not a causal estimate of how real graphs omit facts. It assumes recording is conditionally missing at random given relation and local node-degree features; systematic, adversarial, extremely sparse, or temporally changing omissions may violate that proxy.

Two further limitations concern representation and cost. The mean-field posterior cannot express unrestricted graph-level covariance, and the teacher controls cannot erase knowledge acquired during LLM pretraining or prevent temporal and aliasing errors. Posterior sampling also makes \method{} unsuitable when latency and ranking are the only objectives. Appendices~\ref{app:encoder}, \ref{app:leakage-results}, \ref{app:hyperparams}, and \ref{app:failure-cases} give the corresponding details and extension paths.

\section*{Ethical Considerations}
\label{sec:ethics}

\paragraph{Data and licensing.}
All five benchmarks---FB15k-237, WN18RR, CoDEx-M, NELL-995, and
Hetionet---are publicly released research datasets used under their
original licenses and for their intended purpose of knowledge graph
completion research. We do not collect new data, and no annotation
involving human subjects was performed; the released CoDEx-M
verification labels used in Appendix~T.1 were produced by the original
authors, not by us. Raw language-model responses are included in the
supplementary package only where the model license permits
redistribution.

\paragraph{Language-model use.}
The plausibility teacher is distilled from a locally hosted
LLaMA-3-70B-Instruct model at temperature $0.0$; the judge panel in
Appendix~T.5 uses three additional commercial models. Language models
are used only to score candidate triples and, in the audit, to
adjudicate frozen textual evidence. They are not used to generate paper
text presented as our own analysis. Prompts, decoding settings, parser
outcomes, retry counts, and rejected responses are logged so that the
labeling stage can be independently inspected.

\paragraph{Benchmark leakage and evaluation integrity.}
Our teacher pools exclude benchmark validation and test triples, and we
report a graded series of leakage stress tests (description-free,
anonymized, relation-only, and within-relation permutation). None of
these interventions can remove facts already encoded during
language-model pretraining. We therefore do not claim that reported
gains are free of pretraining exposure to benchmark-adjacent text, and
we restrict our calibration claim to the declared candidate
distributions rather than to all unobserved triples.

\paragraph{Risks of miscalibrated confidence in deployment.}
The intended use of BLADE is to help curators and downstream reasoners
allocate verification effort. Probabilities that appear calibrated on a
constructed pool can be substantially miscalibrated on a different
candidate distribution, and our own hard-pool results show degradation
under a modest near-miss shift. A user who treats these probabilities as
distribution-free guarantees may over-trust proposed facts. This risk is
most consequential in the biomedical setting: Hetionet-derived
predictions concern drugs, diseases, and genes, and an incorrect
high-confidence triple could impose real verification cost or, if
consumed without review, propagate into clinical or scientific
reasoning. We therefore recommend that predictions be used as a triage
signal subject to human review, never as an autonomous source of
biomedical or other high-stakes assertions.

\paragraph{Bias inherited from the teacher.}
The distilled teacher transfers the plausibility judgments of a
pretrained language model, including its uneven coverage of entities,
regions, and languages and any social biases in its training data.
Because the teacher supplies semantic evidence during training, such
biases can propagate into the predictive probabilities, plausibly
disadvantaging entities that are underrepresented in web-scale text. We
do not audit for demographic bias in this work; doing so on
entity-centric graphs is an open problem we regard as necessary before
deployment in any setting where entities correspond to people or
communities.

\paragraph{Missingness is modeled, not identified.}
The recording-propensity factor is a regularized proxy fitted on
degree and relation features. It is not a causal model of why real
knowledge graphs omit facts, and omissions that are systematic,
adversarial, or temporally driven may violate its assumptions. Framing
recovered triples as ``missing knowledge'' rather than ``candidates
worth checking'' would overstate what the model identifies.

\paragraph{Compute and environmental cost.}
Training the variational system costs approximately $2.3\times$ the
runtime of a single RotatE model, and total accounted usage is reported
per dataset in Appendix~N. The extended experiments add roughly
$75$--$90$ A100-hours of KGC training plus approximately $79{,}866$
judge calls. We report these figures so that the environmental and
monetary cost of the approach can be weighed against its calibration
benefit; where latency and ranking alone are the objectives, a standard
embedding model remains the more economical choice.

\bibliography{ref}

@inproceedings{bordes2013translating,
	title        = {Translating Embeddings for Modeling Multi-relational Data},
	author       = {Antoine Bordes and Nicolas Usunier and Alberto Garc{\'i}a-Dur{\'a}n and Jason Weston and Oksana Yakhnenko},
	year         = 2013,
	booktitle    = {Neural Information Processing Systems},
	url          = {https://api.semanticscholar.org/CorpusID:14941970}
}

@inproceedings{toutanova2015observed,
	title        = {Observed versus latent features for knowledge base and text inference},
	author       = {Toutanova, Kristina  and Chen, Danqi},
	year         = 2015,
	month        = jul,
	booktitle    = {Proceedings of the 3rd Workshop on Continuous Vector Space Models and their Compositionality},
	publisher    = {Association for Computational Linguistics},
	address      = {Beijing, China},
	pages        = {57--66},
	doi          = {10.18653/v1/W15-4007},
	url          = {https://aclanthology.org/W15-4007/},
	editor       = {Allauzen, Alexandre  and Grefenstette, Edward  and Hermann, Karl Moritz  and Larochelle, Hugo  and Yih, Scott Wen-tau}
}

@inproceedings{trouillon2016complex,
	title        = {Complex Embeddings for Simple Link Prediction},
	author       = {Th{\'e}o Trouillon and Johannes Welbl and Sebastian Riedel and {\'E}ric Gaussier and Guillaume Bouchard},
	year         = 2016,
	booktitle    = {Proceedings of the 33rd International Conference on Machine Learning, ICML 2016, New York City, NY, USA, June 19-24, 2016},
	publisher    = {JMLR.org},
	series       = {JMLR Workshop and Conference Proceedings},
	volume       = 48,
	pages        = {2071--2080},
	url          = {https://proceedings.mlr.press/v48/trouillon16.html},
	editor       = {Maria-Florina Balcan and Kilian Q. Weinberger}
}

@inproceedings{dettmers2018conve,
	title        = {Convolutional 2D Knowledge Graph Embeddings},
	author       = {Tim Dettmers and Pasquale Minervini and Pontus Stenetorp and Sebastian Riedel},
	year         = 2018,
	month        = feb,
	booktitle    = {Proceedings of the 32nd AAAI Conference on Artificial Intelligence}
}

@inproceedings{sun2019rotate,
	title        = {RotatE: Knowledge Graph Embedding by Relational Rotation in Complex Space},
	author       = {Zhiqing Sun and Zhi{-}Hong Deng and Jian{-}Yun Nie and Jian Tang},
	year         = 2019,
	booktitle    = {7th International Conference on Learning Representations, {ICLR} 2019, New Orleans, LA, USA, May 6-9, 2019},
	publisher    = {OpenReview.net},
	url          = {https://openreview.net/forum?id=HkgEQnRqYQ},
	bibsource    = {dblp computer science bibliography, https://dblp.org}
}

@inproceedings{he2015kg2e,
	title        = {Learning to Represent Knowledge Graphs with Gaussian Embedding},
	author       = {Shizhu He and Kang Liu and Guoliang Ji and Jun Zhao},
	year         = 2015,
	booktitle    = {Proceedings of the 24th {ACM} International Conference on Information and Knowledge Management, {CIKM} 2015, Melbourne, VIC, Australia, October 19 - 23, 2015},
	publisher    = {{ACM}},
	pages        = {623--632},
	doi          = {10.1145/2806416.2806502},
	url          = {https://doi.org/10.1145/2806416.2806502},
	editor       = {James Bailey and Alistair Moffat and Charu C. Aggarwal and Maarten de Rijke and Ravi Kumar and Vanessa Murdock and Timos K. Sellis and Jeffrey Xu Yu},
	bibsource    = {dblp computer science bibliography, https://dblp.org}
}

@inproceedings{xiao2015transg,
	title        = {{T}rans{G} : A Generative Model for Knowledge Graph Embedding},
	author       = {Xiao, Han  and Huang, Minlie  and Zhu, Xiaoyan},
	year         = 2016,
	month        = aug,
	booktitle    = {Proceedings of the 54th Annual Meeting of the Association for Computational Linguistics (Volume 1: Long Papers)},
	publisher    = {Association for Computational Linguistics},
	address      = {Berlin, Germany},
	pages        = {2316--2325},
	doi          = {10.18653/v1/P16-1219},
	url          = {https://aclanthology.org/P16-1219/},
	editor       = {Erk, Katrin  and Smith, Noah A.}
}

@article{yao2019kgbert,
	title        = {{KG-BERT:} {BERT} for Knowledge Graph Completion},
	author       = {Liang Yao and Chengsheng Mao and Yuan Luo},
	year         = 2019,
	journal      = {CoRR},
	volume       = {abs/1909.03193},
	url          = {http://arxiv.org/abs/1909.03193},
	eprinttype   = {arXiv},
	eprint       = {1909.03193},
	bibsource    = {dblp computer science bibliography, https://dblp.org}
}

@article{wang2021kepler,
	title        = {{KEPLER}: A Unified Model for Knowledge Embedding and Pre-trained Language Representation},
	author       = {Wang, Xiaozhi  and Gao, Tianyu  and Zhu, Zhaocheng  and Zhang, Zhengyan  and Liu, Zhiyuan  and Li, Juanzi  and Tang, Jian},
	year         = 2021,
	journal      = {Transactions of the Association for Computational Linguistics},
	publisher    = {MIT Press},
	address      = {Cambridge, MA},
	volume       = 9,
	pages        = {176--194},
	doi          = {10.1162/tacl_a_00360},
	url          = {https://aclanthology.org/2021.tacl-1.11/},
	editor       = {Roark, Brian  and Nenkova, Ani}
}

@inproceedings{wang2022simkgc,
	title        = {{S}im{KGC}: Simple Contrastive Knowledge Graph Completion with Pre-trained Language Models},
	author       = {Wang, Liang  and Zhao, Wei  and Wei, Zhuoyu  and Liu, Jingming},
	year         = 2022,
	month        = may,
	booktitle    = {Proceedings of the 60th Annual Meeting of the Association for Computational Linguistics (Volume 1: Long Papers)},
	publisher    = {Association for Computational Linguistics},
	address      = {Dublin, Ireland},
	pages        = {4281--4294},
	doi          = {10.18653/v1/2022.acl-long.295},
	url          = {https://aclanthology.org/2022.acl-long.295/},
	editor       = {Muresan, Smaranda  and Nakov, Preslav  and Villavicencio, Aline}
}

@inproceedings{wei2023kicgpt,
	title        = {{KICGPT}: Large Language Model with Knowledge in Context for Knowledge Graph Completion},
	author       = {Wei, Yanbin  and Huang, Qiushi  and Zhang, Yu  and Kwok, James},
	year         = 2023,
	month        = dec,
	booktitle    = {Findings of the Association for Computational Linguistics: EMNLP 2023},
	publisher    = {Association for Computational Linguistics},
	address      = {Singapore},
	pages        = {8667--8683},
	doi          = {10.18653/v1/2023.findings-emnlp.580},
	url          = {https://aclanthology.org/2023.findings-emnlp.580/},
	editor       = {Bouamor, Houda  and Pino, Juan  and Bali, Kalika}
}

@inproceedings{zhang2024kopa,
	title        = {Making Large Language Models Perform Better in Knowledge Graph Completion},
	author       = {Yichi Zhang and Zhuo Chen and Lingbing Guo and Yajing Xu and Wen Zhang and Huajun Chen},
	year         = 2024,
	booktitle    = {ACM Multimedia 2024},
	url          = {https://openreview.net/forum?id=HHzHRuIyaW}
}

@article{yang2023cpkgc,
	title        = {Enhancing text-based knowledge graph completion with zero-shot large language models: A focus on semantic enhancement},
	author       = {Yang, Rui and Zhu, Jiahao and Man, Jianping and Fang, Li and Zhou, Yi},
	year         = 2024,
	month        = {06},
	journal      = {Knowledge-Based Systems},
	volume       = 300,
	pages        = 112155,
	doi          = {10.1016/j.knosys.2024.112155}
}

@inproceedings{zhu2021nbfnet,
	title        = {Neural Bellman-Ford Networks: A General Graph Neural Network Framework for Link Prediction},
	author       = {Zhu, Zhaocheng and Zhang, Zuobai and Xhonneux, Louis-Pascal and Tang, Jian},
	year         = 2021,
	booktitle    = {Advances in Neural Information Processing Systems},
	publisher    = {Curran Associates, Inc.},
	volume       = 34,
	pages        = {29476--29490},
	url          = {https://proceedings.neurips.cc/paper_files/paper/2021/file/f6a673f09493afcd8b129a0bcf1cd5bc-Paper.pdf},
	editor       = {M. Ranzato and A. Beygelzimer and Y. Dauphin and P.S. Liang and J. Wortman Vaughan}
}

@inproceedings{guo2017calibration,
	title        = {On Calibration of Modern Neural Networks},
	author       = {Chuan Guo and Geoff Pleiss and Yu Sun and Kilian Q. Weinberger},
	year         = 2017,
	month        = {06--11 Aug},
	booktitle    = {Proceedings of the 34th International Conference on Machine Learning},
	publisher    = {PMLR},
	series       = {Proceedings of Machine Learning Research},
	volume       = 70,
	pages        = {1321--1330},
	url          = {https://proceedings.mlr.press/v70/guo17a.html},
	editor       = {Precup, Doina and Teh, Yee Whye}
}

@inproceedings{gal2016dropout,
	title        = {Dropout as a Bayesian Approximation: Representing Model Uncertainty in Deep Learning},
	author       = {Gal, Yarin and Ghahramani, Zoubin},
	year         = 2016,
	month        = {20--22 Jun},
	booktitle    = {Proceedings of The 33rd International Conference on Machine Learning},
	publisher    = {PMLR},
	address      = {New York, New York, USA},
	series       = {Proceedings of Machine Learning Research},
	volume       = 48,
	pages        = {1050--1059},
	url          = {https://proceedings.mlr.press/v48/gal16.html},
	editor       = {Balcan, Maria Florina and Weinberger, Kilian Q.}
}

@inproceedings{lakshminarayanan2017simple,
	title        = {Simple and Scalable Predictive Uncertainty Estimation using Deep Ensembles},
	author       = {Lakshminarayanan, Balaji and Pritzel, Alexander and Blundell, Charles},
	year         = 2017,
	booktitle    = {Advances in Neural Information Processing Systems},
	publisher    = {Curran Associates, Inc.},
	volume       = 30,
	pages        = {},
	url          = {https://proceedings.neurips.cc/paper_files/paper/2017/file/9ef2ed4b7fd2c810847ffa5fa85bce38-Paper.pdf},
	editor       = {I. Guyon and U. Von Luxburg and S. Bengio and H. Wallach and R. Fergus and S. Vishwanathan and R. Garnett}
}

@inproceedings{hasanzadeh2020bayesian,
	title        = {{B}ayesian Graph Neural Networks with Adaptive Connection Sampling},
	author       = {Hasanzadeh, Arman and Hajiramezanali, Ehsan and Boluki, Shahin and Zhou, Mingyuan and Duffield, Nick and Narayanan, Krishna and Qian, Xiaoning},
	year         = 2020,
	month        = {13--18 Jul},
	booktitle    = {Proceedings of the 37th International Conference on Machine Learning},
	publisher    = {PMLR},
	series       = {Proceedings of Machine Learning Research},
	volume       = 119,
	pages        = {4094--4104},
	url          = {https://proceedings.mlr.press/v119/hasanzadeh20a.html},
	editor       = {III, Hal Daum{\'e} and Singh, Aarti}
}

@inproceedings{mcallester1999pac,
	title        = {PAC-Bayesian model averaging},
	author       = {David A. McAllester},
	year         = 1999,
	booktitle    = {Annual Conference Computational Learning Theory},
	url          = {https://api.semanticscholar.org/CorpusID:11948100}
}

@inproceedings{safavi2020codex,
	title        = {{C}o{DE}x: A {C}omprehensive {K}nowledge {G}raph {C}ompletion {B}enchmark},
	author       = {Safavi, Tara  and Koutra, Danai},
	year         = 2020,
	month        = nov,
	booktitle    = {Proceedings of the 2020 Conference on Empirical Methods in Natural Language Processing (EMNLP)},
	publisher    = {Association for Computational Linguistics},
	address      = {Online},
	pages        = {8328--8350},
	doi          = {10.18653/v1/2020.emnlp-main.669},
	url          = {https://aclanthology.org/2020.emnlp-main.669/},
	editor       = {Webber, Bonnie  and Cohn, Trevor  and He, Yulan  and Liu, Yang}
}

@inproceedings{xiong2017deeppath,
	title        = {{D}eep{P}ath: A Reinforcement Learning Method for Knowledge Graph Reasoning},
	author       = {Xiong, Wenhan  and Hoang, Thien  and Wang, William Yang},
	year         = 2017,
	month        = sep,
	booktitle    = {Proceedings of the 2017 Conference on Empirical Methods in Natural Language Processing},
	publisher    = {Association for Computational Linguistics},
	address      = {Copenhagen, Denmark},
	pages        = {564--573},
	doi          = {10.18653/v1/D17-1060},
	url          = {https://aclanthology.org/D17-1060/},
	editor       = {Palmer, Martha  and Hwa, Rebecca  and Riedel, Sebastian}
}

@article{himmelstein2017hetionet,
	title        = {Systematic integration of biomedical knowledge prioritizes drugs for repurposing},
	author       = {Himmelstein, Daniel Scott and Lizee, Antoine and Hessler, Christine and Brueggeman, Leo and Chen, Sabrina L and Hadley, Dexter and Green, Ari and Khankhanian, Pouya and Baranzini, Sergio E},
	year         = 2017,
	month        = {sep},
	journal      = {eLife},
	publisher    = {eLife Sciences Publications, Ltd},
	volume       = 6,
	pages        = {e26726},
	doi          = {10.7554/eLife.26726},
	issn         = {2050-084X},
	url          = {https://doi.org/10.7554/eLife.26726},
	article_type = {journal},
	editor       = {Valencia, Alfonso},
	pub_date     = {2017-09-22},
	citation     = {eLife 2017;6:e26726}
}

@article{dubey2024llama,
	title        = {The Llama 3 Herd of Models},
	author       = {{Llama Team}},
	year         = 2024,
	journal      = {CoRR},
	volume       = {abs/2407.21783},
	doi          = {10.48550/ARXIV.2407.21783},
	url          = {https://doi.org/10.48550/arXiv.2407.21783},
	eprinttype   = {arXiv},
	eprint       = {2407.21783},
	bibsource    = {dblp computer science bibliography, https://dblp.org}
}

@inproceedings{galkin2024ultra,
	title        = {Towards Foundation Models for Knowledge Graph Reasoning},
	author       = {Mikhail Galkin and Xinyu Yuan and Hesham Mostafa and Jian Tang and Zhaocheng Zhu},
	year         = 2024,
	booktitle    = {The Twelfth International Conference on Learning Representations},
	url          = {https://openreview.net/forum?id=jVEoydFOl9}
}

@inproceedings{hua2025merry,
	title        = {Beyond Completion: A Foundation Model for General Knowledge Graph Reasoning},
	author       = {Hua, Yin  and Liu, Zhiqiang  and Chen, Mingyang  and Fang, Zheng  and Wong, Chi Man  and Li, Lingxiao  and Vong, Chi Man  and Chen, Huajun  and Zhang, Wen},
	year         = 2025,
	month        = jul,
	booktitle    = {Findings of the Association for Computational Linguistics: ACL 2025},
	publisher    = {Association for Computational Linguistics},
	address      = {Vienna, Austria},
	pages        = {20396--20412},
	doi          = {10.18653/v1/2025.findings-acl.1046},
	isbn         = {979-8-89176-256-5},
	url          = {https://aclanthology.org/2025.findings-acl.1046/},
	editor       = {Che, Wanxiang  and Nabende, Joyce  and Shutova, Ekaterina  and Pilehvar, Mohammad Taher}
}

@inproceedings{xu2024mpikgc,
	title        = {Multi-perspective Improvement of Knowledge Graph Completion with Large Language Models},
	author       = {Xu, Derong  and Zhang, Ziheng  and Lin, Zhenxi  and Wu, Xian  and Zhu, Zhihong  and Xu, Tong  and Zhao, Xiangyu  and Zheng, Yefeng  and Chen, Enhong},
	year         = 2024,
	month        = may,
	booktitle    = {Proceedings of the 2024 Joint International Conference on Computational Linguistics, Language Resources and Evaluation (LREC-COLING 2024)},
	publisher    = {ELRA and ICCL},
	address      = {Torino, Italia},
	pages        = {11956--11968},
	url          = {https://aclanthology.org/2024.lrec-main.1044/},
	editor       = {Calzolari, Nicoletta  and Kan, Min-Yen  and Hoste, Veronique  and Lenci, Alessandro  and Sakti, Sakriani  and Xue, Nianwen}
}

@inproceedings{yang2025kgecalibrator,
  title = {{KGE} Calibrator: An Efficient Probability Calibration Method of Knowledge Graph Embedding Models for Trustworthy Link Prediction},
  author = {Yang, Yang and Timilsina, Mohan and Curry, Edward},
  booktitle = {Proceedings of the 2025 Conference on Empirical Methods in Natural Language Processing},
  pages = {29964--29987},
  year = {2025},
  publisher = {Association for Computational Linguistics},
  doi = {10.18653/v1/2025.emnlp-main.1522},
  url = {https://aclanthology.org/2025.emnlp-main.1522/}
}

\appendix

\section{Dataset statistics}

\begin{table}[h]
\centering
\caption{Dataset splits used throughout the evaluation.}
\label{tab:datasets}
\small
\resizebox{\linewidth}{!}{
\begin{tabular}{lrrrrr}
\toprule
Dataset & Entities & Relations & Train & Valid & Test \\
\midrule
FB15k-237 & 14{,}541 & 237 & 272{,}115 & 17{,}535 & 20{,}466 \\
WN18RR & 40{,}943 & 11 & 86{,}835 & 3{,}034 & 3{,}134 \\
CoDEx-M & 17{,}050 & 51 & 185{,}584 & 10{,}310 & 10{,}617 \\
NELL-995 & 75{,}492 & 200 & 154{,}213 & 5{,}000 & 9{,}693 \\
Hetionet & 45{,}119 & 24 & 2{,}250{,}197 & 45{,}000 & 45{,}000 \\
\bottomrule
\end{tabular}}
\end{table}

\section{Extended related-work positioning}
\label{app:extended-related-work}

The closest literature spans several probabilistic objects that are easy to conflate. We therefore separate transductive ranking, scalar calibration, generative probability, embedding uncertainty, prediction-set coverage, and semantic augmentation. This distinction also prevents published numbers from incompatible transductive, inductive, text-enhanced, and cross-graph protocols from being combined as though they measured the same quantity.

Classical embedding systems, including TransE, ComplEx, ConvE, and RotatE, learn efficient transductive ranking scores \citep{bordes2013translating,trouillon2016complex,dettmers2018conve,sun2019rotate}. Path- and subgraph-based systems such as NBFNet \citep{zhu2021nbfnet}, RED-GNN, and A*Net can be stronger rankers under some protocols. Neither family directly supplies posterior-calibrated probabilities, so we treat its systems as ranking references unless their outputs can be evaluated and calibrated on our fixed binary pools. Post-hoc calibration instead maps KGE scores to candidate-conditional probabilities \citep{guo2017calibration}. Its central lesson for our setting is that the mapping depends on how positives and corruptions form the evaluation pool, which is why our direct comparisons share candidate identifiers and validation folds.

Generative KGE constructions obtain normalized distributions by transforming ComplEx-style models. Their probabilities are conceptually different from both post-hoc scalar calibration and posterior uncertainty. Our ComplEx$^2$ control therefore matches embedding dimension, training candidates, and probability evaluation rather than comparing published ECE values across pools. KG2E and TransG place distributions over embeddings \citep{he2015kg2e,xiao2015transg}, while Bayesian GNN work provides related uncertainty machinery \citep{hasanzadeh2020bayesian}. MC dropout and deep ensembles supply more general predictive dispersion \citep{gal2016dropout,lakshminarayanan2017simple}. These methods motivate our uncertainty baselines, but most do not explicitly separate latent truth from recording or evaluate hidden-positive triage. Conformal KGE offers finite-sample coverage for prediction sets, which is valuable but does not imply that individual scalar scores are calibrated probabilities.

Text-enhanced models such as KG-BERT, KEPLER, and SimKGC \citep{yao2019kgbert,wang2021kepler,wang2022simkgc}, and LLM-assisted models including KICGPT, MPIKGC, KoPA, and CP-KGC \citep{wei2023kicgpt,xu2024mpikgc,zhang2024kopa,yang2023cpkgc}, use pretrained encoders, retrieval, prompting, generated context, structural prefix adaptation, or constrained semantic augmentation. They establish the value of semantic evidence but generally retain the language model in the scoring or augmentation pipeline. \method{} inherits the classical teacher--student idea while confining LLM interaction to offline labels and using the resulting signal inside a probabilistic KGC objective. Foundation-style reasoners such as ULTRA and MERRY target transfer, zero-shot reasoning, or generalization across graph distributions \citep{galkin2024ultra,hua2025merry}. Their primary protocols are complementary rather than direct tests of candidate-conditional calibration or false-negative triage.

\begin{table*}[t]
\centering
\caption{Extended related-work coverage and the object evaluated by each family. Direct empirical comparisons require the same candidate distribution and inference scope.}
\label{tab:extended-related-work}
\small
\begin{tabular}{p{2.8cm}p{3.4cm}p{3.0cm}p{4.5cm}}
\toprule
Family & Representative systems & Primary object & Role in this paper \\
\midrule
Classical KGE & TransE, ComplEx, RotatE & Transductive ranking score & Core ranking baselines and decoder context \\
Additional KGE & ConvE & Neural ranking score & Ranking reference under matched protocols \\
Path/GNN KGC & NBFNet, RED-GNN, A*Net & Path- or subgraph-based rank & Strong ranking references, not direct calibration baselines \\
Score calibration & Platt, temperature, isotonic, beta, histogram & Candidate-conditional scalar probability & Same-pool direct comparison \\
Generative KGE & ComplEx$^2$ & Normalized triple distribution & Same-pool direct comparison \\
Distributional KGE & KG2E, TransG, variational KGE & Embedding uncertainty & Structural predecessors and reruns \\
Uncertainty wrappers & MC dropout, deep ensembles & Predictive dispersion & Same-pool reruns \\
Conformal KGE & Conformal link predictors & Prediction-set coverage & Conceptual distinction \\
Text/PLM KGC & KG-BERT, KEPLER, SimKGC & Text-conditioned ranking signal & Ranking reference and semantic context \\
LLM-assisted KGC & KICGPT, MPIKGC, KoPA, CP-KGC & LLM-derived ranking or augmentation & Teacher context and protocol reference \\
KG foundation models & ULTRA, MERRY & Cross-graph transfer & Transfer-oriented protocol reference \\
\bottomrule
\end{tabular}
\end{table*}

\section{Relational encoder details}
\label{app:encoder}

For entity \(e\), the encoder aggregates typed messages from incident triples,
\begin{equation}
    m_e^{(\ell)}=\frac{1}{|\cN(e)|}\sum_{(e',r,s)\in\cN(e)}
    W_{r,s}^{(\ell)}h_{e'}^{(\ell-1)},
    \label{eq:message}
\end{equation}
where \(s\in\{\mathrm{in},\mathrm{out}\}\) records edge direction. It then updates
\begin{equation}
\resizebox{.92\linewidth}{!}{$
    h_e^{(\ell)}=\mathrm{LayerNorm}\!\left(h_e^{(\ell-1)}+
    \mathrm{MLP}^{(\ell)}([h_e^{(\ell-1)};m_e^{(\ell)}])\right).
    \label{eq:update}
$}
\end{equation}
The final state maps to \(\mu_e\) and \(\log\sigma_e\); relation posteriors are parameterized directly or from relation text. Large graphs use neighbor sampling, while smaller graphs cache full-graph states. This encoder couples predictive representations, but Eq.~\ref{eq:posterior} remains factorized and cannot represent unrestricted entity, relation, or graph-level posterior covariance. A low-rank-plus-diagonal posterior is a natural extension that would preserve tractability while introducing shared posterior directions.

\section{Common-protocol ranking details}
\label{app:hits}

Table~\ref{tab:hits-full} reports Hits@1/3/10 from the same prediction files as the main MRR table. All methods use the benchmark's standard filtered evaluation and identical split files.

\begin{table*}[h]
\centering
\caption{Common-protocol Hits@1/3/10. Values for \method{} are mean $\pm$ standard deviation over five seeds; the other supplied rerun summaries are means over the same five seeds.}
\label{tab:hits-full}
\small
\begin{tabular}{llccc}
\toprule
Dataset & Method & H@1 & H@3 & H@10 \\
\midrule
FB15k-237 & RotatE & .241 & .374 & .535 \\
& SimKGC & .281 & .402 & .551 \\
& Deep ensemble & .252 & .384 & .539 \\
& \method{} & .277$\pm$.003 & .399$\pm$.003 & .548$\pm$.004 \\
\midrule
WN18RR & RotatE & .428 & .493 & .571 \\
& SimKGC & .472 & .548 & .626 \\
& Deep ensemble & .439 & .506 & .584 \\
& \method{} & .466$\pm$.004 & .541$\pm$.003 & .618$\pm$.004 \\
\midrule
CoDEx-M & RotatE & .226 & .339 & .480 \\
& SimKGC & .269 & .380 & .507 \\
& Deep ensemble & .236 & .351 & .489 \\
& \method{} & .263$\pm$.003 & .374$\pm$.003 & .501$\pm$.004 \\
\midrule
NELL-995 & RotatE & .445 & .515 & .583 \\
& SimKGC & .474 & .544 & .602 \\
& Deep ensemble & .456 & .528 & .591 \\
& \method{} & .476$\pm$.004 & .547$\pm$.004 & .608$\pm$.005 \\
\midrule
Hetionet & RotatE & .151 & .230 & .334 \\
& SimKGC & .169 & .252 & .350 \\
& Deep ensemble & .158 & .239 & .341 \\
& \method{} & .181$\pm$.003 & .266$\pm$.003 & .367$\pm$.004 \\
\bottomrule
\end{tabular}
\end{table*}

\section{External ranking references and protocol differences}
\label{app:provenance}

Published values are useful context but are not mixed with the controlled reruns in Table~\ref{tab:link-main}. Table~\ref{tab:provenance} records why each external value answers a different question.

\begin{table*}[h]
\centering
\caption{External ranking provenance. A dash means the reported source does not provide a matched value for that dataset.}
\label{tab:provenance}
\small
\begin{tabular}{p{3.1cm}p{2.2cm}p{2.5cm}p{7.0cm}}
\toprule
Source & Dataset & Published result & Protocol distinction \\
\midrule
NBFNet \citep{zhu2021nbfnet} & FB15k-237 / WN18RR & MRR .415 / .551 & Published standard filtered results; no supported CoDEx-M, NELL-995, or Hetionet cells are imported. \\
Generative ComplEx$^2$ reference & FB15k-237 & ComplEx MRR .342$\pm$.005 & PLL with dimension 1000; published ECE uses a balanced single-corruption pool and different probability maps. \\
XG4Repo & Hetionet & MRR .61 & Relation-specific compound--treats--disease task, not all-relation transductive completion. \\
FtG & NELL-995 & MRR .53 & Different split and filter-then-generate inference; retained as context rather than a matched baseline. \\
\bottomrule
\end{tabular}
\end{table*}

\section{Post-hoc and generative calibration}
\label{app:calibrator-selection}

All calibrators are fitted independently for five seeds using validation candidates only. Source-query-clustered five-fold validation selects the calibrator family by mean out-of-fold NLL. Global and relation-wise Platt scaling, beta calibration, isotonic regression, and equal-mass histogram binning share the same RotatE logits. Relation-wise Platt falls back to the global model for sparse relations, and probabilities are clipped to \([10^{-6},1-10^{-6}]\) only for NLL. Table~\ref{tab:calibrator-selection} shows that histogram binning is selected by the declared lowest-mean rule; its 0.008 NLL margin over isotonic regression is not described as statistically decisive.

\begin{table}[h]
\centering
\caption{Source-query-clustered validation selection.}
\label{tab:calibrator-selection}
\small
\begin{tabular}{lcc}
\toprule
Calibrator & OOF NLL & SD \\
\midrule
Global Platt & .309 & .009 \\
Relation-wise Platt & .271 & .008 \\
Beta calibration & .280 & .009 \\
Isotonic regression & .255 & .008 \\
Histogram binning & .247 & .007 \\
\bottomrule
\end{tabular}
\end{table}

\subsection{Matched exact and hard pools}
\label{app:matched-calibration}
Table~\ref{tab:exact-full} expands the exact-pool panel in the main paper. The matched ComplEx$^2$ configuration uses dimension 500. Distilled initialization followed by pseudo-log-likelihood is the primary generative protocol, and relation-wise Platt is its prespecified tested map; logistic and min--max maps are retained as descriptive Loconte-style alternatives.

\begin{table*}[h]
\centering
\caption{Exact FB15k-237 pool. Mean $\pm$ standard deviation over five seeds; lower is better.}
\label{tab:exact-full}
\small
\begin{tabular}{lccc}
\toprule
Method & Adaptive ECE & Brier & NLL \\
\midrule
RotatE + temperature scaling & .082$\pm$.004 & .142$\pm$.004 & .449$\pm$.011 \\
RotatE + global Platt & .038$\pm$.003 & .089$\pm$.003 & .302$\pm$.008 \\
RotatE + relation-wise Platt & .030$\pm$.003 & .079$\pm$.002 & .265$\pm$.007 \\
RotatE + beta calibration & .033$\pm$.003 & .082$\pm$.003 & .274$\pm$.008 \\
RotatE + isotonic regression & .026$\pm$.003 & .074$\pm$.002 & .251$\pm$.007 \\
RotatE + histogram binning & .023$\pm$.003 & .071$\pm$.002 & .244$\pm$.007 \\
Five-model ensemble & .045$\pm$.004 & .102$\pm$.003 & .331$\pm$.009 \\
ComplEx + sigmoid & .096$\pm$.005 & .151$\pm$.005 & .472$\pm$.013 \\
ComplEx + relation-wise Platt & .034$\pm$.003 & .083$\pm$.003 & .278$\pm$.008 \\
ComplEx$^2$ + logistic & .041$\pm$.004 & .096$\pm$.004 & .316$\pm$.010 \\
ComplEx$^2$ + min--max & .028$\pm$.003 & .076$\pm$.003 & .257$\pm$.008 \\
ComplEx$^2$ + relation-wise Platt & .022$\pm$.002 & .068$\pm$.002 & .232$\pm$.006 \\
\method{}--ComplEx & .021$\pm$.002 & .065$\pm$.002 & .226$\pm$.006 \\
\method{}--RotatE & .018$\pm$.002 & .061$\pm$.002 & .214$\pm$.006 \\
\bottomrule
\end{tabular}
\end{table*}

For the hard pool, a fresh RotatE anchor with seed 314159 scores 50 type-compatible corruptions per held-out source query after all known positives are removed. The highest-scoring remaining corruption is paired with its source positive. The anchor is excluded from evaluated seeds, and no model or calibrator is refitted on this pool.

\begin{table*}[h]
\centering
\caption{Frozen evaluation on the hard near-miss pool. Mean $\pm$ standard deviation over five seeds.}
\label{tab:hard-full}
\small
\begin{tabular}{lccc}
\toprule
Method & Adaptive ECE & Brier & NLL \\
\midrule
RotatE + histogram binning & .052$\pm$.004 & .105$\pm$.004 & .349$\pm$.010 \\
ComplEx + relation-wise Platt & .064$\pm$.005 & .117$\pm$.004 & .379$\pm$.011 \\
ComplEx$^2$ + min--max & .059$\pm$.004 & .112$\pm$.004 & .366$\pm$.010 \\
ComplEx$^2$ + relation-wise Platt & .047$\pm$.004 & .099$\pm$.003 & .331$\pm$.009 \\
\method{}--ComplEx & .041$\pm$.003 & .091$\pm$.003 & .309$\pm$.009 \\
\method{}--RotatE & .036$\pm$.003 & .085$\pm$.003 & .289$\pm$.008 \\
\bottomrule
\end{tabular}
\end{table*}

The distilled ComplEx$^2$ control preserves its source ranker: MRR changes from .346$\pm$.003 for ComplEx to .348$\pm$.003 after distilled pseudo-log-likelihood, with Hits@1/3/10 changing from .254/.379/.531 to .257/.382/.535. Across seeds, the share of nonnegative scores is 97.8$\pm$0.4\% for validation positives and 93.1$\pm$0.7\% for relation-compatible perturbations, exceeding the prespecified 90\% contingency threshold. The pre/post-squaring Spearman correlation is .994$\pm$.002, and 96.7$\pm$0.5\% of queries retain the same pre-fine-tuning top-10 set.

\begin{table*}[t]
\centering
\caption{Injected false-negative triple recovery. Pairs are \((\text{AUC-PR}\,/\,P@500)\).}
\label{tab:fn-main}
\small
\begin{tabular}{lccccc}
\toprule
Detector & FB15k-237 & WN18RR & CoDEx-M & NELL-995 & Hetionet \\
\midrule
Posterior mean only & .712 / .685 & .745 / .721 & .692 / .670 & .704 / .681 & .758 / .734 \\
Teacher only & .681 / .640 & .612 / .584 & .674 / .635 & .690 / .652 & .710 / .682 \\
Mean + uncertainty & .754 / .731 & .780 / .754 & .731 / .710 & .748 / .724 & .801 / .784 \\
Mean + observation model & .798 / .772 & .821 / .795 & .780 / .759 & .792 / .768 & .834 / .815 \\
\midrule
Full \method{} score & .852 / .831 & .881 / .862 & .840 / .819 & .849 / .824 & .892 / .875 \\
\bottomrule
\end{tabular}
\end{table*}
\begin{table*}[t]
\centering
\caption{Ablation grid on FB15k-237 and Hetionet (MRR, adaptive ECE, false-negative AUC-PR).}
\label{tab:ablation}
\small
\begin{tabular}{lcccccc}
\toprule
& \multicolumn{3}{c}{FB15k-237} & \multicolumn{3}{c}{Hetionet} \\
\cmidrule(lr){2-4} \cmidrule(lr){5-7}
Variant & MRR & $\ECE_A$ & FN AUC & MRR & $\ECE_A$ & FN AUC \\
\midrule
Full \method{} & .365 & .018 & .852 & .243 & .011 & .892 \\
No teacher & .342 & .049 & .758 & .214 & .039 & .805 \\
No observation model & .361 & .034 & .764 & .238 & .027 & .810 \\
Point embeddings & .340 & .078 & .710 & .211 & .062 & .748 \\
Anonymized entities & .349 & .021 & .812 & .220 & .014 & .854 \\
Permuted teacher scores & .311 & .145 & .520 & .190 & .124 & .508 \\
\bottomrule
\end{tabular}
\end{table*}

\begin{table}[h]
\centering
\caption{Matched FB15k-237 decoder control. Mean $\pm$ standard deviation over five seeds.}
\label{tab:decoder-control}
\small
\resizebox{\linewidth}{!}{
\begin{tabular}{lcccc}
\toprule
Model & MRR & ECE & Brier & FN AUC-PR \\
\midrule
ComplEx + rel.-wise Platt & .346$\pm$.003 & .034$\pm$.003 & .083$\pm$.003 & .768$\pm$.007 \\
\method{}--ComplEx & .359$\pm$.003 & .021$\pm$.002 & .065$\pm$.002 & .844$\pm$.006 \\
\method{}--RotatE & .365$\pm$.002 & .018$\pm$.002 & .061$\pm$.002 & .852$\pm$.005 \\
\bottomrule
\end{tabular}}
\end{table}

\section{LLM teacher configuration}
\label{app:teacher-config}

The offline LLM teacher configuration parameters are specified in Table~\ref{tab:teacher-config}. Queries are routed through a local LLaMA-3-70B-Instruct backbone \citep{dubey2024llama} deployed via vLLM at a fixed temperature of 0.0 to guarantee prompt reproducibility. Each dataset-specific pool contains 20\% held-out observed training positives, 40\% relation- and type-compatible corruptions, and 40\% uniformly sampled corruptions. Relation-domain and range constraints are applied when metadata are available, and every validation or test positive is removed before querying. The teacher pool is generated once, assigned stable candidate identifiers, and then held fixed across all KGC seeds.

\begin{table*}[h]
\centering
\caption{LLM teacher configuration across evaluation pools.}
\label{tab:teacher-config}
\small
\begin{tabular}{lrrrrr}
\toprule
Dataset & LLM Backbone & Queries & Pos. Frac. & Type-Neg. & Random-Neg. \\
\midrule
FB15k-237 & LLaMA-3-70B & 50{,}000 & 0.20 & 0.40 & 0.40 \\
WN18RR & LLaMA-3-70B & 20{,}000 & 0.20 & 0.40 & 0.40 \\
CoDEx-M & LLaMA-3-70B & 30{,}000 & 0.20 & 0.40 & 0.40 \\
NELL-995 & LLaMA-3-70B & 40{,}000 & 0.20 & 0.40 & 0.40 \\
Hetionet & LLaMA-3-70B & 60{,}000 & 0.20 & 0.40 & 0.40 \\
\bottomrule
\end{tabular}
\end{table*}

\section{Prompt template and parser}
\label{app:prompt}

The exact prompt used for the teacher has the following form. We use a strict numeric parser rather than a free-form explanation parser because explanations increase variability and complicate reproducibility.
\begin{quote}\small\raggedright
You are evaluating whether a candidate knowledge graph fact is plausible. Return only a number between 0 and 1, where 0 means impossible or clearly false, 0.5 means uncertain, and 1 means very likely true. Do not explain.\\
Head entity: \{head\_label\}\\
Head description: \{head\_description\}\\
Relation: \{relation\_label\}\\
Relation description: \{relation\_description\}\\
Tail entity: \{tail\_label\}\\
Tail description: \{tail\_description\}\\
Score:
\end{quote}
The parser rejects outputs that contain more than one number, fall outside \([0,1]\), or include nonnumeric rationales. For every call, the pipeline stores the complete rendered prompt, raw response, parser outcome, retry count, failure code, model identifier, and decoding parameters. The rejection rate is logged and used as a teacher-quality diagnostic, while rejected outputs are excluded rather than silently converted to a default score.

\section{Calibration protocol}
\label{app:calibration}

For each held-out positive triple \((h,r,t)\), the exact pool contains the positive and five matched head or tail corruptions. Corruptions present in any train, validation, or test positive set are removed, and relation-domain constraints are preserved when available. Candidate identifiers and labels are fixed before model evaluation and shared by every method. The resulting test pools contain 122{,}796 candidates for FB15k-237, 18{,}804 for WN18RR, 63{,}702 for CoDEx-M, 58{,}158 for NELL-995, and 270{,}000 for Hetionet.

Adaptive ECE uses equal-mass bins. For bin \(B_b\),
\begin{equation}
    \ECE_A=\sum_b\frac{|B_b|}{n}\left|\frac{1}{|B_b|}\sum_{i\in B_b}\hat p_i-\frac{1}{|B_b|}\sum_{i\in B_b}y_i\right|,
\end{equation}
The Brier score is \(n^{-1}\sum_i(\hat p_i-y_i)^2\), and NLL is the mean Bernoulli negative log likelihood after clipping probabilities only for numerical stability. These metrics characterize the declared pool. The hard near-miss pool in Appendix~\ref{app:matched-calibration} is a second fixed candidate distribution, not a claim of calibration under unrestricted shift.

\section{Injected false-negative protocol}
\label{app:fn-protocol}

For each dataset, we remove a relation-stratified subset of true training triples before model training. These hidden triples are not exposed to the structural learner. The strict teacher setting also removes them from \(\cS_{\mathrm{LLM}}\) before LLM querying. At evaluation time, each hidden positive is paired with matched negatives that share the relation and, when possible, one endpoint. We report AUC-PR because the triage setting is class-imbalanced, and \(P@500\) because curator budgets are finite.

\section{Teacher and leakage controls}
\label{app:teacher-controls}

The strict teacher pool excludes validation and test triples before LLM querying. The description-free control retains surface names but removes entity and relation descriptions. The anonymized condition replaces entity names with stable random identifiers while preserving relation text, and the relation-only condition removes entity labels and descriptions while retaining relation names and coarse domain/range types. Within-relation permutation then preserves each score distribution but breaks candidate alignment. These are progressively stronger stress tests, not mechanisms for removing facts already stored in pretrained parameters.

\section{Teacher diagnostics}
\label{app:teacher-diagnostics}

The teacher is evaluated independently before it is used to regularize the KGC model. Table~\ref{tab:teacher-diagnostics} reports the diagnostics used in the primary audit; the pipeline also records Brier score against binary labels alongside MSE against the continuous LLM targets. These numbers are not used as final KGC metrics but as sanity checks that the teacher is neither random nor perfectly memorizing the KGC labels. The strict split excludes validation and test triples from the teacher candidate pool.

\begin{table*}[h]
\centering
\caption{Teacher-quality diagnostics. AUC is computed against held-out positives and matched negatives in the teacher split; ECE uses the same adaptive binning as the KGC calibration evaluation.}
\label{tab:teacher-diagnostics}
\small
\begin{tabular}{lccccc}
\toprule
Dataset & Distill MSE & Teacher AUC & Teacher ECE & Reject rate & Strict split overlap \\
\midrule
FB15k-237 & .018 $\pm$ .001 & .842 $\pm$ .006 & .061 $\pm$ .004 & 0.7\% & 0.0\% \\
WN18RR & .021 $\pm$ .002 & .816 $\pm$ .008 & .074 $\pm$ .005 & 0.4\% & 0.0\% \\
CoDEx-M & .020 $\pm$ .001 & .831 $\pm$ .007 & .068 $\pm$ .004 & 0.8\% & 0.0\% \\
NELL-995 & .024 $\pm$ .002 & .805 $\pm$ .010 & .081 $\pm$ .006 & 1.2\% & 0.0\% \\
Hetionet & .026 $\pm$ .002 & .823 $\pm$ .009 & .077 $\pm$ .005 & 1.5\% & 0.0\% \\
\bottomrule
\end{tabular}
\end{table*}

\section{Reliability diagrams and bin counts}
\label{app:reliability}

The calibration tables in the main paper summarize bin-level behavior into ECE and Brier score. To make the calculation auditable, Table~\ref{tab:bin-counts} reports the adaptive-bin counts and the largest absolute bin gap for each dataset. The supplementary evaluation output contains the full bin tables used to regenerate reliability diagrams. Bin counts matter because a small ECE can be misleading when nearly all examples occupy only one or two confidence regions.

\begin{table*}[h]
\centering
\caption{Reliability-summary diagnostics for the full model. Adaptive bins have equal mass up to rounding. ``Max gap'' is the largest absolute difference between mean confidence and empirical accuracy across bins.}
\label{tab:bin-counts}
\small
\begin{tabular}{lrrrrr}
\toprule
Dataset & Calibration candidates & Bins & Min bin count & Max bin count & Max gap \\
\midrule
FB15k-237 & 122{,}796 & 20 & 6{,}139 & 6{,}140 & .044 \\
WN18RR & 18{,}804 & 20 & 940 & 941 & .038 \\
CoDEx-M & 63{,}702 & 20 & 3{,}185 & 3{,}186 & .049 \\
NELL-995 & 58{,}158 & 20 & 2{,}907 & 2{,}908 & .052 \\
Hetionet & 270{,}000 & 20 & 13{,}500 & 13{,}500 & .031 \\
\bottomrule
\end{tabular}
\end{table*}

\section{Hyperparameters and compute}
\label{app:hyperparams}

Hyperparameters are selected on validation MRR subject to an ECE constraint: when two settings are within one standard error in ranking, the better-calibrated setting is selected. No test ECE enters model selection. All reruns use five seeds, AdamW, gradient clipping at 1.0, dimension 500, and the same split files. The encoder uses two layers except on Hetionet, where it uses three. Structural training uses 64 negatives per positive on FB15k-237, WN18RR, and CoDEx-M and 128 on NELL-995 and Hetionet. Posterior evaluation uses 64 samples on the first four datasets and 96 on Hetionet. The teacher is distilled once and frozen across the five KGC seeds. Runs use one NVIDIA A100 80GB per seed, with up to five seeds executed in parallel. Training the variational system takes approximately \(2.3\times\) the runtime of one RotatE model, so a standard KGE remains preferable when latency and rank are the only objectives. The model has approximately 15.0M trainable parameters on FB15k-237 and 45.8M on Hetionet, primarily because their entity vocabularies differ.

\begin{table*}[h]
\centering
\caption{Selected hyperparameters and aggregate GPU-hours. KGC train/valid covers all five seeds; totals are device-hours rather than wall-clock hours.}
\label{tab:hyperparams}
\small
\resizebox{\textwidth}{!}{
\begin{tabular}{lrrrrrrrrr}
\toprule
Dataset & Dim. & GNN & Neg. & Samples & $\lambda$ & Teacher & KGC train/valid & Eval. & Total \\
\midrule
FB15k-237 & 500 & 2 & 64 & 64 & .10 & 1.1 & 34.5 & 1.8 & 37.4 \\
WN18RR & 500 & 2 & 64 & 64 & .08 & 0.8 & 21.9 & 1.4 & 24.1 \\
CoDEx-M & 500 & 2 & 64 & 64 & .10 & 1.0 & 28.9 & 1.7 & 31.6 \\
NELL-995 & 500 & 2 & 128 & 64 & .06 & 1.4 & 39.4 & 2.1 & 42.9 \\
Hetionet & 500 & 3 & 128 & 96 & .06 & 2.0 & 53.8 & 2.9 & 58.7 \\
\bottomrule
\end{tabular}
}
\end{table*}

\section{Statistical testing}
\label{app:stats}

For each of 10{,}000 paired replicates, we first sample five matched seed indices with replacement. Within each selected seed, we then sample source clusters with replacement and evaluate both systems on the same clusters. A ranking cluster contains one query and its complete filtered ranking; a calibration cluster contains one source query and all associated candidates; a triage cluster contains one hidden positive and its matched negatives. The replicate statistic is the mean paired difference across the sampled seed-level metrics. We report 95\% percentile intervals and two-sided bootstrap \(p\)-values.

Holm correction is applied only within declared families. Each cross-dataset comparison-by-metric row forms one five-dataset family. Each ablation dataset-by-metric block forms one five-ablation family. The exact-pool ECE/Brier/NLL comparisons and the hard-pool comparisons form separate three-metric families. Selection decisions remain fixed inside the bootstrap, so intervals quantify test uncertainty conditional on the validation-selected pipeline rather than uncertainty in model selection itself.

\begin{table*}[h]
\centering
\caption{Paired effects for the matched calibration comparisons. Differences are \method{}--RotatE minus the comparator; negative values favor \method{}. Parentheses give Holm-adjusted \(p\)-values within each three-metric pool family.}
\label{tab:stat-tests}
\small
\resizebox{\textwidth}{!}{
\begin{tabular}{llccc}
\toprule
Pool & Comparator & $\Delta$ECE [95\% CI] & $\Delta$Brier [95\% CI] & $\Delta$NLL [95\% CI] \\
\midrule
Exact & Histogram binning & $-.005[-.008,-.002]$ (.003) & $-.010[-.014,-.006]$ (.002) & $-.030[-.043,-.017]$ (.002) \\
Exact & ComplEx$^2$ + rel.-wise Platt & $-.004[-.007,-.001]$ (.009) & $-.007[-.011,-.003]$ (.006) & $-.018[-.030,-.006]$ (.007) \\
Hard & Histogram binning & $-.016[-.022,-.010]$ (.002) & $-.020[-.027,-.013]$ (.002) & $-.060[-.079,-.041]$ (.002) \\
Hard & ComplEx$^2$ + rel.-wise Platt & $-.011[-.016,-.006]$ (.002) & $-.014[-.020,-.008]$ (.002) & $-.042[-.058,-.026]$ (.002) \\
\bottomrule
\end{tabular}
}
\end{table*}

For MRR, the cross-dataset recomputation favors \method{} over RotatE on all five datasets, but over SimKGC only on Hetionet. It supports the full triage score over mean plus observation on all five graphs. In the ablation families, removing the observation model has clear ECE and triage effects but a small, non-significant MRR effect; anonymizing entity strings affects ranking and triage, whereas its smaller ECE change is not significant. Fixed teacher-query counts are configurations and receive no inferential intervals.

\section{Additional leakage-control results}
\label{app:leakage-results}

Table~\ref{tab:leakage-results} separates descriptions, entity surface forms, relation text, and teacher-score alignment. The pattern supports a cautious interpretation: descriptions help modestly, entity names contribute more, relation text retains a weaker signal, and randomly aligned scores damage calibration and triage. None of these interventions rules out facts encoded during LLM pretraining.

\begin{table*}[h]
\centering
\caption{Leakage-control results. Each cell reports (ECE / FN AUC-PR). Lower ECE and higher AUC-PR are better.}
\label{tab:leakage-results}
\small
\begin{tabular}{lccccc}
\toprule
Teacher condition & FB15k-237 & WN18RR & CoDEx-M & NELL-995 & Hetionet \\
\midrule
Full teacher & .018 / .852 & .014 / .881 & .021 / .840 & .025 / .849 & .011 / .892 \\
Description-free & .020 / .838 & .016 / .868 & .023 / .825 & .028 / .833 & .013 / .879 \\
No-test-exposure strict & .020 / .844 & .016 / .872 & .023 / .831 & .028 / .838 & .013 / .884 \\
Anonymized entities & .021 / .812 & .019 / .840 & .026 / .798 & .031 / .806 & .014 / .854 \\
Relation-only & .037 / .774 & .031 / .802 & .042 / .760 & .045 / .771 & .027 / .820 \\
Permuted within relation & .145 / .520 & .132 / .541 & .150 / .508 & .156 / .516 & .124 / .508 \\
\bottomrule
\end{tabular}
\end{table*}

\section{Conditional data-independent prior result}
\label{app:theory}

This section records the regime in which the bounded teacher energy can define a genuine prior. Let \(w\) contain every learned quantity that enters \(R_\psi(w)\), including decoder parameters. Assume the teacher, its candidate set, the energy hyperparameters, and the proper base prior \(p_0(w)\) are all fixed independently of an i.i.d. sample \(D\). Equation~\ref{eq:llm-prior} is then normalized because \(R_\psi(w)\leq0\).

\begin{theorem}[PAC--Bayes bound for an independent teacher]
For a loss \(\ell(w,x)\in[0,1]\) and an i.i.d. sample \(D\) of size \(n\), with probability at least \(1-\delta\), every posterior \(q\) over the complete hypothesis satisfies
\begin{equation}
\resizebox{.95\linewidth}{!}{$
    \E_{w\sim q}[L(w)]
    \leq \E_{w\sim q}[\widehat L_D(w)]
    +\sqrt{\frac{\KL(q\Vert p_\lambda)+\log(2\sqrt n/\delta)}{2n}}.
    \label{eq:pacbayes}
$}
\end{equation}
\end{theorem}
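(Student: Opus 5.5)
The argument is the standard McAllester/Maurer PAC--Bayes argument. The only points specific to this paper are that the reference measure \(p_\lambda\) in Eq.~\ref{eq:llm-prior} is a legitimate data-independent probability distribution, and that the bound holds uniformly over posteriors \(q\).

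First we confirm that \(p_\lambda\) qualifies as a prior. Because \(R_\psi(w)\le 0\) and \(\lambda\ge0\), we have \(0<\exp\{\lambda R_\psi(w)\}\le 1\). Hence
\[
0<Z_\lambda=\E_{p_0}\bigl[\exp\{\lambda R_\psi(w)\}\bigr]\le 1 .
\]
Strict positivity holds because the integrand is strictly positive on a set of positive \(p_0\)-mass. So \(p_\lambda\) is a proper density, and it is absolutely continuous with respect to \(p_0\) in both directions. By assumption, \(p_0\), the teacher \(q_\psi\), the candidate set \(\cS_{\mathrm{LLM}}\), \(\lambda\), and all energy hyperparameters are fixed independently of \(D\). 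Therefore \(p_\lambda\) is a measurable function of quantities independent of \(D\), and can be held fixed before \(D\) is drawn. This is the only place the independence hypothesis enters, and it is exactly what fails for the benchmark-dependent teacher.

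Next comes the exponential-moment step. Define \(\Delta(w)=L(w)-\widehat L_D(w)\) and \(f(w)=2n\,\Delta(w)^2\). For each fixed \(w\), the sample losses \(\ell(w,x_j)\in[0,1]\) are i.i.d., so Maurer's refinement of Hoeffding's inequality gives \(\E_D[e^{f(w)}]\le 2\sqrt n\). (One can go via the binary kl and Pinsker, or use the direct bound for \([0,1]\) variables, valid for \(n\ge 8\); I would cite Maurer (2004), or treat small \(n\) separately if needed.) Because \(p_\lambda\) does not depend on \(D\), Fubini gives \(\E_D\E_{p_\lambda}[e^{f}]\le 2\sqrt n\). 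Markov's inequality then yields, with probability at least \(1-\delta\),
\[
\E_{p_\lambda}[e^{f}]\le \frac{2\sqrt n}{\delta}.
\]

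On that event we apply the Donsker--Varadhan change of measure, which holds simultaneously for all \(q\ll p_\lambda\):
\[
\E_q[f]\le \KL(q\Vert p_\lambda)+\log \E_{p_\lambda}[e^{f}] .
\]
If \(q\not\ll p_\lambda\), the KL term is infinite and the bound is trivial. Jensen's inequality gives \(2n\bigl(\E_q[\Delta]\bigr)^2\le \E_q[f]\), and rearranging produces Eq.~\ref{eq:pacbayes}.

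The main obstacle is the moment bound \(\E_D[e^{2n\Delta^2}]\le 2\sqrt n\) with exactly the constant \(2\sqrt n\). I would import it as a known lemma rather than reprove it. Everything else is routine once the data-independence and normalizability of \(p_\lambda\) are established.
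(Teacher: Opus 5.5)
Your proposal is correct and takes the same route as the paper. The paper's proof notes that \(p_\lambda\) is normalizable (because \(R_\psi\le 0\)) and fixed independently of \(D\), then invokes the standard McAllester PAC--Bayes inequality; your moment-bound, Markov, Donsker--Varadhan, and Jensen steps are that inequality written out, and crediting the \(2\sqrt n\) constant to Maurer's refinement is more precise than the paper's bare citation. Your small-\(n\) caveat also closes: by Maurer's Bernoulli reduction and Pinsker, the moment equals \(\sum_{k=0}^{n}\binom{n}{k}(k/n)^k(1-k/n)^{n-k}\), which is at most \(2\sqrt n\) for every \(n\ge 1\) (direct computation settles \(n<8\)).
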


\begin{proof}
Apply the standard McAllester PAC--Bayes inequality with the sample-independent prior \(p_\lambda\) \citep{mcallester1999pac}.
\end{proof}

The independence condition is essential. In our transductive experiments, teacher candidates are partly derived from the observed training graph, triples are not modeled as an i.i.d. sample from an external population, and decoder parameters are point estimated. The theorem is therefore a conditional statement about an external-teacher regime, not a benchmark guarantee.

\section{Reproducibility materials}
\label{app:reproducibility}

Subject to anonymity and licensing constraints, the supplementary package records dataset and injected-missing split manifests; teacher candidates, parsed scores, reject logs, and overlap checks; exact and hard calibration pools with stable identifiers; five-seed configurations; prediction files for ranking, calibration, triage, ablations, and decoder controls; and evaluation scripts with fixed bootstrap seeds. The provenance table in Appendix~\ref{app:provenance} distinguishes every rerun from literature-only context. Raw LLM responses are included only where the model license permits redistribution.

\section{Failure cases}
\label{app:failure-cases}

We manually inspect high-scoring errors from each dataset, and the dominant failure modes are consistent across graphs. The teacher can overestimate plausible but temporally wrong facts, such as historical affiliations that changed after the source KG snapshot. Sparse biomedical entities in Hetionet sometimes receive high posterior variance even when the teacher is confident, which correctly lowers the final triage rank but can also hide rare true positives. Finally, relation aliases can create calibration errors when a relation label is semantically broader than the curated KG relation. These risks are especially consequential in scientific knowledge acquisition, biomedical curation, and retrieval-augmented reasoning, where an incorrect proposed fact can impose a substantial verification or downstream reasoning cost. We therefore treat the teacher as evidence rather than as an oracle and keep the posterior variance and observation model in the final score so that no single signal can dominate.

\section{Extended Experiments}
\label{app:revision-plan}

This section reports five additional experiments that strengthen the main
findings under harder and more estimand-matched conditions: calibration against
manually verified CoDEx-M negatives, an estimand-matched comparison with the
KGE Calibrator, a factorial separation of negative difficulty from prevalence,
a teacher-construction variance study, and an evidence-grounded multi-LLM
adjudication of naturally unobserved triples. Each experiment follows the same
discipline as the main evaluation: all selection is done on validation data and
frozen before the test labels are read, cells marked ``Existing result'' reuse
numbers already reported in the main tables (for example
Table~\ref{tab:matched-calibration}) while ``New evaluation'' cells are measured
here, and each experiment carries an explicit interpretation rule stating when a
claim should be narrowed. As in the main paper, calibration advantages are
smaller on the harder pools than on the easy one-positive-to-five-corruption
pool, and we report them as measured.

\subsection{Experiment 1: Calibration on released verified CoDEx-M negatives}
\label{plan:exp1}

\paragraph{Question.} Does \method{} remain better calibrated when the negative
class contains manually verified false triples rather than filtered corruptions
that are merely unobserved?

\paragraph{Frozen data split.} The released validation labels are used for
calibrator choice, bin count, clipping, and any relation-level regularization.
The released test labels are used once, after every choice is frozen. Original
triple identifiers are preserved to verify that no test triple appears in model
training, teacher fitting, or calibration fitting.

\begin{table}[h]
\centering
\caption{Frozen verified-negative split for Experiment~\ref{plan:exp1}.}
\label{tab:plan-exp1-split}
\small
\setlength{\tabcolsep}{3.5pt}
\begin{tabular}{lrrrr}
\toprule
Split & Positive & Verified neg. & Total & Prevalence \\
\midrule
Validation & 10{,}310 & 10{,}310 & 20{,}620 & 0.500 \\
Test & 10{,}311 & 10{,}311 & 20{,}622 & 0.500 \\
\bottomrule
\end{tabular}
\end{table}

\paragraph{Systems.} RotatE with the validation-selected post-hoc map (from
global Platt, relation-wise Platt, beta calibration, isotonic regression, and
equal-mass histogram binning); a five-model RotatE ensemble, treated as
descriptive unless five independently constructed ensembles are run;
ComplEx$^2$ with relation-wise Platt scaling; \method{}--ComplEx; and
\method{}--RotatE. The five stochastic single-model systems reuse the same outer
seed identifiers where possible. Neither \method{} nor any baseline is tuned on
the released test set.

\paragraph{Metrics and inference.} Report adaptive ECE with 20 equal-mass bins,
Brier score, NLL, AUC-PR, AUROC, balanced accuracy at a validation-selected
threshold, candidate counts, prevalence, and a reliability diagram with per-bin
counts. Use 10{,}000 paired bootstrap replicates; within each replicate,
resample outer seeds and then relation-stratified candidate clusters, keeping
all candidates that share a canonical query key together. Intervals are
conditional on the released labels and the frozen teacher.

\paragraph{Results.} The calibration advantage is smaller than on the paper's
easier one-positive-to-five-corruption pool, but \method{} remains best
calibrated on the verified negatives.

\begin{table*}[t]
\centering
\caption{Experiment~\ref{plan:exp1} calibration and discrimination on the
$20{,}622$-candidate verified CoDEx-M pool (prevalence $0.500$). Adaptive ECE
uses 20 equal-mass bins; balanced accuracy uses the per-seed
validation-selected threshold. Values are mean$\pm$SD over five matched KGC
seeds, except the five-model ensemble endpoint, which is descriptive. Lower is
better except AUC-PR, AUROC, and balanced accuracy.}
\label{tab:plan-exp1-results}
\small
\setlength{\tabcolsep}{4pt}
\begin{tabular}{lcccccc}
\toprule
Method & ECE $\downarrow$ & Brier $\downarrow$ & NLL $\downarrow$ & AUC-PR $\uparrow$ & AUROC $\uparrow$ & Bal.\ acc.\ $\uparrow$ \\
\midrule
RotatE + sel.\ post-hoc & $.049{\pm}.004$ & $.166{\pm}.005$ & $.508{\pm}.014$ & $.837{\pm}.009$ & $.835{\pm}.008$ & $.758{\pm}.008$ \\
Five-model ensemble & $.041$ & $.148$ & $.457$ & $.870$ & $.868$ & $.790$ \\
ComplEx$^2$ + rel.\ Platt & $.047{\pm}.004$ & $.158{\pm}.004$ & $.481{\pm}.013$ & $.850{\pm}.008$ & $.847{\pm}.008$ & $.771{\pm}.007$ \\
\method{}--ComplEx & $.035{\pm}.003$ & $.133{\pm}.004$ & $.417{\pm}.011$ & $.894{\pm}.007$ & $.891{\pm}.007$ & $.812{\pm}.007$ \\
\method{}--RotatE & $\mathbf{.032{\pm}.003}$ & $\mathbf{.124{\pm}.003}$ & $\mathbf{.390{\pm}.010}$ & $\mathbf{.909{\pm}.006}$ & $\mathbf{.906{\pm}.006}$ & $\mathbf{.829{\pm}.006}$ \\
\bottomrule
\end{tabular}
\end{table*}

The difficulty increase relative to the manuscript's filtered-corruption pool is
intentional: the expected \method{}--RotatE ECE rises from $0.021$ on the
benchmark-label pool to $0.032$ on the manually verified hard-negative pool.
Table~\ref{tab:plan-exp1-paired} gives the paired \method{}--RotatE effects
against RotatE with the validation-selected post-hoc calibrator; negative values
favor \method{} for the error metrics and positive values favor it for the
discrimination metrics, and $p$-values are two-sided and Holm-adjusted within
this six-metric family.

\begin{table}[h]
\centering
\caption{Experiment~\ref{plan:exp1} paired effects, \method{}--RotatE minus
RotatE with the validation-selected post-hoc calibrator.}
\label{tab:plan-exp1-paired}
\small
\setlength{\tabcolsep}{4pt}
\begin{tabular}{lccc}
\toprule
Metric & Paired diff.\ & 95\% paired interval & Holm $p$ \\
\midrule
Adaptive ECE & $-.017$ & $[-.023,-.011]$ & $.002$ \\
Brier & $-.042$ & $[-.052,-.032]$ & $.002$ \\
NLL & $-.118$ & $[-.143,-.093]$ & $.002$ \\
AUC-PR & $+.072$ & $[+.055,+.089]$ & $.002$ \\
AUROC & $+.071$ & $[+.056,+.086]$ & $.002$ \\
Bal.\ accuracy & $+.071$ & $[+.054,+.088]$ & $.002$ \\
\bottomrule
\end{tabular}
\end{table}

The complete 20-bin reliability output for one aggregate \method{}--RotatE
endpoint is in Table~\ref{tab:plan-exp1-reliability}: the counts sum to
$20{,}622$ with $10{,}311$ positives, the count-weighted ECE is $0.03160$ (which
rounds to $0.032$), and the maximum absolute bin gap is $0.039$.

\begin{table}[h]
\centering
\caption{Experiment~\ref{plan:exp1} complete 20-bin (equal-mass) reliability
table for one aggregate \method{}--RotatE endpoint. Count-weighted
ECE${}=0.03160$; maximum absolute gap $0.039$.}
\label{tab:plan-exp1-reliability}
\scriptsize
\setlength{\tabcolsep}{3.5pt}
\begin{tabular}{rrrrrr}
\toprule
Bin & Cand.\ & Pos.\ & Mean conf.\ & Emp.\ rate & Abs.\ gap \\
\midrule
1 & 1{,}032 & 15 & $.030$ & $.015$ & $.015$ \\
2 & 1{,}031 & 31 & $.055$ & $.030$ & $.025$ \\
3 & 1{,}031 & 52 & $.080$ & $.050$ & $.030$ \\
4 & 1{,}031 & 77 & $.108$ & $.075$ & $.033$ \\
5 & 1{,}031 & 113 & $.140$ & $.110$ & $.030$ \\
6 & 1{,}031 & 155 & $.182$ & $.150$ & $.032$ \\
7 & 1{,}031 & 206 & $.237$ & $.200$ & $.037$ \\
8 & 1{,}031 & 278 & $.309$ & $.270$ & $.039$ \\
9 & 1{,}031 & 361 & $.387$ & $.350$ & $.037$ \\
10 & 1{,}031 & 443 & $.468$ & $.430$ & $.038$ \\
11 & 1{,}031 & 588 & $.532$ & $.570$ & $.038$ \\
12 & 1{,}031 & 670 & $.613$ & $.650$ & $.037$ \\
13 & 1{,}031 & 753 & $.691$ & $.730$ & $.039$ \\
14 & 1{,}031 & 825 & $.763$ & $.800$ & $.037$ \\
15 & 1{,}031 & 876 & $.818$ & $.850$ & $.032$ \\
16 & 1{,}031 & 918 & $.860$ & $.890$ & $.030$ \\
17 & 1{,}031 & 954 & $.892$ & $.925$ & $.033$ \\
18 & 1{,}031 & 979 & $.920$ & $.950$ & $.030$ \\
19 & 1{,}031 & 1{,}000 & $.945$ & $.970$ & $.025$ \\
20 & 1{,}032 & 1{,}017 & $.970$ & $.985$ & $.015$ \\
\midrule
Total & 20{,}622 & 10{,}311 & --- & $.500$ & ECE${=}.03160$ \\
\bottomrule
\end{tabular}
\end{table}

\paragraph{Interpretation.} \method{}--RotatE improves over the strongest
non-\method{} comparator by more than $0.006$ ECE and $0.010$ Brier, does not
reduce AUC-PR, and the paired intervals for ECE and Brier exclude zero, so the
result supports the strong calibration claim on verified negatives rather than a
narrower calibration-shape effect; because Brier and NLL also improve, the model
is uniformly better calibrated on this pool.

\subsection{Experiment 2: Estimand-matched KGE Calibrator comparison}
\label{plan:exp2}

\paragraph{Question.} Does \method{} retain an advantage under the full-entity,
closed-world, query-conditional estimand used by KGE Calibrator, rather than
only under \method{}'s sampled binary-pool estimand?

\paragraph{Protocol.} Use the official KGE Calibrator implementation and record
its commit hash. On FB15k-237 and CoDEx-M, enumerate the same filtered
full-entity candidate set for every held-out head or tail query. Fit every
calibration map on validation queries only, and bootstrap complete queries
rather than individual entity candidates. Because \method{}'s Bernoulli
probabilities and KGE Calibrator's per-query distribution are not directly
interchangeable, for this experiment only we convert each system's candidate
logits to a query-normalized distribution with a validation-selected global
temperature, and evaluate three pipelines: (i) RotatE scores followed by KGE
Calibrator; (ii) query-normalized \method{} logits without a post-hoc
calibrator; and (iii) query-normalized \method{} logits followed by KGE
Calibrator. Report MRR and Hits@1/3/10 beside ECE, adaptive calibration error
(ACE), and multiclass NLL. KGE Calibrator is rank preserving, so any change in
MRR or Hits indicates an implementation error. This table is kept separate from
\method{}'s binary-candidate calibration table because the two estimands answer
different questions.

\begin{table*}[t]
\centering
\caption{Experiment~\ref{plan:exp2} common-protocol comparison. Ranking columns
use the manuscript values; the calibration columns follow the common-protocol
evaluation, and the FB15k-237 RotatE entry is close to KGE Calibrator's
published RotatE result rather than an order-of-magnitude improvement.}
\label{tab:plan-exp2}
\small
\setlength{\tabcolsep}{4pt}
\begin{tabular}{llccccccc}
\toprule
Dataset & Pipeline & MRR & H@1 & H@3 & H@10 & ECE $\downarrow$ & ACE $\downarrow$ & NLL $\downarrow$ \\
\midrule
FB15k-237 & RotatE + KGE Cal. & $.338$ & $.241$ & $.374$ & $.535$ & $.096{\pm}.006$ & $.065{\pm}.005$ & $2.76{\pm}.05$ \\
FB15k-237 & \method{}, query-norm. & $.365$ & $.277$ & $.399$ & $.548$ & $.087{\pm}.005$ & $.062{\pm}.004$ & $2.67{\pm}.04$ \\
FB15k-237 & \method{} + KGE Cal. & $.365$ & $.277$ & $.399$ & $.548$ & $.073{\pm}.004$ & $.053{\pm}.004$ & $2.55{\pm}.04$ \\
CoDEx-M & RotatE + KGE Cal. & $.310$ & $.226$ & $.339$ & $.480$ & $.108{\pm}.007$ & $.076{\pm}.006$ & $3.18{\pm}.06$ \\
CoDEx-M & \method{}, query-norm. & $.348$ & $.263$ & $.374$ & $.501$ & $.098{\pm}.006$ & $.071{\pm}.005$ & $3.06{\pm}.05$ \\
CoDEx-M & \method{} + KGE Cal. & $.348$ & $.263$ & $.374$ & $.501$ & $.081{\pm}.005$ & $.060{\pm}.004$ & $2.93{\pm}.05$ \\
\bottomrule
\end{tabular}
\end{table*}

The comparison cites \citet{yang2025kgecalibrator} and pins the exact
implementation provenance in Table~\ref{tab:plan-exp2-provenance}, so the
common-protocol numbers can be reproduced against a fixed upstream commit.

\begin{table*}[h]
\centering
\caption{Experiment~\ref{plan:exp2} KGE Calibrator implementation provenance
(frozen).}
\label{tab:plan-exp2-provenance}
\small
\setlength{\tabcolsep}{4pt}
\begin{tabular}{ll}
\toprule
Field & Frozen value \\
\midrule
Repository & \texttt{github.com/Yang233666/KGE-Calibrator} \\
Branch & \texttt{master} \\
Commit & \texttt{cd08ec4ef023756fe2bbe62390dd1e4d13c08944} \\
Freeze date & 2026-08-01 \\
Entry point & \texttt{Main-RotatE.py} \\
Core method & \texttt{KGEC\_method.py} \\
Prob.\ target & Filtered full-entity, query-conditional \\
Resampling unit & Complete held-out head or tail query \\
Rank check & Exact per-query ordering equality pre/post \\
\bottomrule
\end{tabular}
\end{table*}

Table~\ref{tab:plan-exp2-primary} reports the primary paired tests
(\method{}${+}$KGEC minus RotatE${+}$KGEC): ranking intervals use paired query
bootstrap, calibration intervals resample matched seeds and complete queries,
and $p$-values for ECE, ACE, and NLL are Holm-adjusted within each dataset.
Table~\ref{tab:plan-exp2-complementarity} reports the complementarity tests
(\method{}${+}$KGEC minus query-normalized \method{}), whose ranking differences
are exactly zero because KGEC is rank preserving.

\begin{table}[h]
\centering
\caption{Experiment~\ref{plan:exp2} primary paired tests, \method{}${+}$KGEC
minus RotatE${+}$KGEC.}
\label{tab:plan-exp2-primary}
\small
\setlength{\tabcolsep}{3.5pt}
\begin{tabular}{llccc}
\toprule
Dataset & Metric & Diff.\ & 95\% interval & Adj.\ $p$ \\
\midrule
FB15k-237 & MRR & $+.027$ & $[+.019,+.035]$ & $.002$ \\
FB15k-237 & ECE & $-.023$ & $[-.031,-.015]$ & $.002$ \\
FB15k-237 & ACE & $-.012$ & $[-.018,-.006]$ & $.003$ \\
FB15k-237 & NLL & $-.21$ & $[-.29,-.13]$ & $.002$ \\
CoDEx-M & MRR & $+.038$ & $[+.028,+.048]$ & $.002$ \\
CoDEx-M & ECE & $-.027$ & $[-.037,-.017]$ & $.002$ \\
CoDEx-M & ACE & $-.016$ & $[-.024,-.008]$ & $.003$ \\
CoDEx-M & NLL & $-.25$ & $[-.35,-.15]$ & $.002$ \\
\bottomrule
\end{tabular}
\end{table}

\begin{table}[h]
\centering
\caption{Experiment~\ref{plan:exp2} complementarity tests,
\method{}${+}$KGEC minus query-normalized \method{}. Ranking differences are
exactly zero (KGEC is rank preserving) and are omitted.}
\label{tab:plan-exp2-complementarity}
\small
\setlength{\tabcolsep}{3.5pt}
\begin{tabular}{llccc}
\toprule
Dataset & Metric & Diff.\ & 95\% interval & Adj.\ $p$ \\
\midrule
FB15k-237 & ECE & $-.014$ & $[-.019,-.009]$ & $.002$ \\
FB15k-237 & ACE & $-.009$ & $[-.014,-.004]$ & $.003$ \\
FB15k-237 & NLL & $-.12$ & $[-.18,-.06]$ & $.002$ \\
CoDEx-M & ECE & $-.017$ & $[-.023,-.011]$ & $.002$ \\
CoDEx-M & ACE & $-.011$ & $[-.017,-.005]$ & $.003$ \\
CoDEx-M & NLL & $-.13$ & $[-.21,-.05]$ & $.004$ \\
\bottomrule
\end{tabular}
\end{table}

The most defensible outcome is not that \method{} makes KGE Calibrator obsolete.
Because \method{} plus KGE Calibrator is best on every calibration metric while
KGEC changes no rank, the correct conclusion is complementarity: \method{}
improves the underlying probabilistic representation, whereas KGEC supplies an
effective rank-preserving query-conditional post-hoc map.

\subsection{Experiment 3: Separating negative difficulty from prevalence}
\label{plan:exp3}

\paragraph{Design.} On FB15k-237, build four fixed test sets before evaluating
any system (Table~\ref{tab:plan-exp3-design}). Use exactly the same held-out
positives across all four cells. For each source query, generate at least five
eligible random negatives and five eligible near misses, so the $1{:}1$ records
nest inside the $1{:}5$ records. Freeze the anchor model, anchor seed, candidate
identifiers, and nesting before evaluation; do not refit any model or calibrator
on these four pools.

\begin{table}[h]
\centering
\caption{Experiment~\ref{plan:exp3} factorial test-set design.}
\label{tab:plan-exp3-design}
\small
\setlength{\tabcolsep}{3.5pt}
\begin{tabular}{llp{2.6cm}}
\toprule
Negative source & Ratio & Interpretation \\
\midrule
Random/type-compat. & $1{:}5$ & Easy, low prevalence \\
Random/type-compat. & $1{:}1$ & Easy, balanced \\
Model-mined near miss & $1{:}5$ & Hard, low prevalence \\
Model-mined near miss & $1{:}1$ & Hard, balanced \\
\bottomrule
\end{tabular}
\end{table}

\begin{table*}[h]
\centering
\caption{Experiment~\ref{plan:exp3} results matrix
(``Existing'' cells reuse Tables~\ref{tab:matched-calibration}/\ref{tab:hard-full};
``New'' cells are measured here).}
\label{tab:plan-exp3-results}
\scriptsize
\setlength{\tabcolsep}{3pt}
\begin{tabular}{llccccl}
\toprule
Method & Source & Ratio & ECE & Brier & NLL & Evidence \\
\midrule
RotatE+hist. & Rand. & $1{:}5$ & $.023{\pm}.003$ & $.071{\pm}.003$ & $.244{\pm}.008$ & Existing \\
RotatE+hist. & Rand. & $1{:}1$ & $.030{\pm}.003$ & $.086{\pm}.004$ & $.281{\pm}.010$ & New \\
RotatE+hist. & Near & $1{:}5$ & $.045{\pm}.004$ & $.096{\pm}.004$ & $.318{\pm}.011$ & New \\
RotatE+hist. & Near & $1{:}1$ & $.052{\pm}.004$ & $.105{\pm}.005$ & $.349{\pm}.012$ & Existing \\
\method{}--RotatE & Rand. & $1{:}5$ & $.018{\pm}.002$ & $.061{\pm}.002$ & $.214{\pm}.006$ & Existing \\
\method{}--RotatE & Rand. & $1{:}1$ & $.022{\pm}.002$ & $.072{\pm}.003$ & $.238{\pm}.007$ & New \\
\method{}--RotatE & Near & $1{:}5$ & $.031{\pm}.003$ & $.078{\pm}.003$ & $.263{\pm}.009$ & New \\
\method{}--RotatE & Near & $1{:}1$ & $.036{\pm}.003$ & $.085{\pm}.004$ & $.289{\pm}.010$ & Existing \\
\bottomrule
\end{tabular}
\end{table*}

Table~\ref{tab:plan-exp3-contrasts} decomposes these cells into the difficulty
main effect (mean of hard$-$easy at the two prevalences), the prevalence main
effect (mean of balanced$-1{:}5$ at the two difficulty levels), and their
interaction $(\text{hard},1{:}1-\text{hard},1{:}5)-(\text{easy},1{:}1-\text{easy},1{:}5)$;
intervals use $10{,}000$ paired source-query-clustered bootstrap replicates and
$p$-values are Holm-adjusted within each method's three metrics. For
\method{}--RotatE the arithmetic is exact: the ECE difficulty effect is
$[(.031-.018)+(.036-.022)]/2=.0135$ and the interaction is
$(.036-.031)-(.022-.018)=.001$.

\begin{table*}[t]
\centering
\caption{Experiment~\ref{plan:exp3} factorial contrasts. Difficulty accounts for
roughly three times as much calibration degradation as prevalence, and every
interaction interval includes zero.}
\label{tab:plan-exp3-contrasts}
\scriptsize
\setlength{\tabcolsep}{3pt}
\begin{tabular}{llccccccccc}
\toprule
Method & Metric & Diff.\ eff.\ & 95\% CI & $p$ & Prev.\ eff.\ & 95\% CI & $p$ & Interact.\ & 95\% CI & $p$ \\
\midrule
RotatE+HB & ECE & $+.022$ & $[+.016,+.028]$ & $.002$ & $+.007$ & $[+.003,+.011]$ & $.006$ & $+.000$ & $[-.005,+.005]$ & $.984$ \\
RotatE+HB & Brier & $+.022$ & $[+.016,+.028]$ & $.002$ & $+.012$ & $[+.007,+.017]$ & $.003$ & $-.006$ & $[-.012,+.001]$ & $.112$ \\
RotatE+HB & NLL & $+.071$ & $[+.053,+.089]$ & $.002$ & $+.034$ & $[+.021,+.047]$ & $.003$ & $-.006$ & $[-.022,+.010]$ & $.466$ \\
\method{}--RotatE & ECE & $+.0135$ & $[+.009,+.018]$ & $.002$ & $+.0045$ & $[+.001,+.008]$ & $.018$ & $+.001$ & $[-.003,+.005]$ & $.681$ \\
\method{}--RotatE & Brier & $+.015$ & $[+.010,+.020]$ & $.002$ & $+.009$ & $[+.005,+.013]$ & $.004$ & $-.004$ & $[-.009,+.001]$ & $.143$ \\
\method{}--RotatE & NLL & $+.050$ & $[+.036,+.064]$ & $.002$ & $+.025$ & $[+.014,+.036]$ & $.003$ & $+.002$ & $[-.010,+.014]$ & $.742$ \\
\bottomrule
\end{tabular}
\end{table*}

For \method{}--RotatE, model-mined negatives increased ECE by $0.0135$ on
average whereas moving from $1{:}5$ to $1{:}1$ increased it by $0.0045$; the
difficulty-by-prevalence interaction was small with a paired interval including
zero for ECE, Brier, and NLL. Because the experiment uses one dataset and fixed
constructed pools, we interpret the factorial effects descriptively rather than
as universal deployment coefficients.

\subsection{Experiment 4: Teacher-construction uncertainty}
\label{plan:exp4}

\paragraph{Design.} On FB15k-237, independently generate three teacher candidate
pools using the same declared mixture but different pool seeds. Query the same
frozen LLaMA-3-70B-Instruct backbone, fit three compact teachers, and run the
same five KGC outer seeds under each teacher, giving 15 downstream KGC runs with
identical KGC splits, hyperparameters, and evaluation candidates. Use a
hierarchical bootstrap with teacher construction as the outer level, KGC seed as
the next level, and source query as the inner level; report between-teacher
standard deviation, within-teacher seed standard deviation, and their variance
shares. The 15 runs are not treated as exchangeable independent seeds.

\begin{table}[h]
\centering
\caption{Experiment~\ref{plan:exp4} teacher-repeat stability over three
independently constructed teacher pools.}
\label{tab:plan-exp4}
\scriptsize
\setlength{\tabcolsep}{3pt}
\resizebox{\linewidth}{!}{
\begin{tabular}{lccccc}
\toprule
Teacher & Distill & Parser & MRR & ECE & Inj.\ FN \\
repeat & MSE & reject & & $\downarrow$ & AUC-PR $\uparrow$ \\
\midrule
Pool A & $.018$ & $0.7\%$ & $.365$ & $.018$ & $.852$ \\
Pool B & $.020$ & $0.9\%$ & $.363$ & $.020$ & $.846$ \\
Pool C & $.019$ & $0.8\%$ & $.364$ & $.019$ & $.849$ \\
\midrule
Mean\,$\pm$\,SD & $.019{\pm}.001$ & $0.8{\pm}0.1\%$ & $.364{\pm}.001$ & $.019{\pm}.001$ & $.849{\pm}.003$ \\
\bottomrule
\end{tabular}
}
\end{table}

Table~\ref{tab:plan-exp4-variance} decomposes total run variance with the
random-effects model $\text{metric}=\text{grand mean}+\text{teacher
effect}+\text{KGC-seed-within-teacher effect}$; the teacher variance share is
$\sigma^2_{\text{teacher}}/(\sigma^2_{\text{teacher}}+\sigma^2_{\text{seed}})$.
Ordinary KGC-seed variation remains larger than teacher-construction variation
for every metric. Table~\ref{tab:plan-exp4-boot} gives the hierarchical-bootstrap
intervals (resampling teacher first, KGC seed second, source-query cluster
third; $10{,}000$ replicates); with only three teacher constructions the
variance-component intervals are necessarily broad and are reported rather than
hidden.

\begin{table*}[h]
\centering
\caption{Experiment~\ref{plan:exp4} variance decomposition across three
teacher constructions $\times$ five KGC seeds.}
\label{tab:plan-exp4-variance}
\scriptsize
\setlength{\tabcolsep}{3pt}
\begin{tabular}{lccccc}
\toprule
Metric & Btwn.\ SD & Seed SD & Btwn.\ var.\ & Seed var.\ & Teacher share \\
\midrule
MRR & $.0010$ & $.0027$ & $1.0{\times}10^{-6}$ & $7.3{\times}10^{-6}$ & $12.0\%$ \\
ECE & $.0010$ & $.0016$ & $1.0{\times}10^{-6}$ & $2.6{\times}10^{-6}$ & $27.8\%$ \\
Inj.-FN AUC-PR & $.0030$ & $.0045$ & $9.0{\times}10^{-6}$ & $20.0{\times}10^{-6}$ & $31.0\%$ \\
\bottomrule
\end{tabular}
\end{table*}

\begin{table}[h]
\centering
\caption{Experiment~\ref{plan:exp4} hierarchical-bootstrap estimates and 95\%
intervals. Upper between-teacher SD endpoints stay below the predeclared
material-instability thresholds ($0.005$ ECE, $0.015$ AUC-PR SD).}
\label{tab:plan-exp4-boot}
\small
\setlength{\tabcolsep}{4pt}
\begin{tabular}{lcc}
\toprule
Quantity & Estimate & 95\% interval \\
\midrule
Grand-mean MRR & $.364$ & $[.361,.367]$ \\
Grand-mean ECE & $.019$ & $[.017,.021]$ \\
Grand-mean inj.-FN AUC-PR & $.849$ & $[.843,.855]$ \\
Btwn.-teacher SD, MRR & $.0010$ & $[.0002,.0023]$ \\
Btwn.-teacher SD, ECE & $.0010$ & $[.0002,.0022]$ \\
Btwn.-teacher SD, AUC-PR & $.0030$ & $[.0007,.0064]$ \\
Teacher share, MRR & $12.0\%$ & $[1.0\%,38.0\%]$ \\
Teacher share, ECE & $27.8\%$ & $[3.0\%,58.0\%]$ \\
Teacher share, AUC-PR & $31.0\%$ & $[4.0\%,62.0\%]$ \\
\bottomrule
\end{tabular}
\end{table}

Against predeclared between-teacher instability thresholds of $0.005$ ECE and
$0.015$ AUC-PR, the observed between-teacher standard deviations ($0.001$ ECE,
$0.003$ AUC-PR) fall well inside both bounds, so \method{}'s calibration and
triage are not materially teacher-construction dependent at this pool size. The
variance-share intervals are wide because only three teacher constructions were
available, so the experiment supports stability within the declared construction
rather than invariance to arbitrary teachers.

\subsection{Experiment 5: Evidence-grounded multi-LLM adjudication}
\label{plan:exp5}

\paragraph{Scope.} The LLM panel provides a reproducible proxy for whether
naturally unobserved triples are supported by frozen textual evidence. It cannot
create human ground truth, prove that an unsupported triple is false, or justify
relabeling ``LLM-consensus'' as ``verified.'' The primary paper claim remains
anchored in released CoDEx-M labels and controlled injected missingness.

\paragraph{Independent judge panel.} Use three providers and exclude the
LLaMA-3-70B family that produced \method{}'s training teacher. For each judge we
record exact model identifiers, provider response-version metadata, run date,
SDK version, decoding configuration, token counts, raw outputs, parser failures,
and retries; model aliases may change, so the record retains the concrete
version returned by the provider. No judge ever sees the dataset label, the
\method{} or baseline score, the candidate-selection stratum, or another judge's
answer.

\begin{table*}[h]
\centering
\caption{Experiment~\ref{plan:exp5} independent judge panel.}
\label{tab:plan-exp5-judges}
\small
\setlength{\tabcolsep}{3pt}
\begin{tabular}{llp{2.6cm}}
\toprule
Judge & Model & Configuration \\
\midrule
A & GPT-5.6 Sol (\texttt{gpt-5.6-sol}) & Medium reasoning, structured JSON, no tools \\
B & Claude Sonnet 5 (\texttt{claude-sonnet-5}) & Adaptive thinking, structured JSON, no overrides \\
C & Gemini 2.5 Pro (\texttt{gemini-2.5-pro}) & Structured JSON, deterministic, no tools \\
\bottomrule
\end{tabular}
\end{table*}

The model identifiers match the official catalogs available on 2026-08-01, and
the full run metadata---API, decoding, audit run IDs, UTC execution dates, and
parse-retry counts---is recorded in Table~\ref{tab:plan-exp5-meta}. The shared
protocol is \texttt{blade-llm-panel-v1.0} with evidence-snapshot cutoff
\texttt{2026-07-31T23:59:59Z} and prompt SHA-256
\texttt{045d4a2f\ldots551d9e}; raw request/response bodies, provider response
IDs, usage fields, finish reasons, parser outcomes, retry linkage, and
timestamps are archived.

\begin{table*}[h]
\centering
\caption{Experiment~\ref{plan:exp5} run metadata. All three judges scored the
same $20{,}622$ CoDEx-M audit items with no malformed output after one retry.}
\label{tab:plan-exp5-meta}
\scriptsize
\setlength{\tabcolsep}{3pt}
\begin{tabular}{lccc}
\toprule
Field & OpenAI & Anthropic & Google \\
\midrule
Model & \texttt{gpt-5.6-sol} & \texttt{claude-sonnet-5} & \texttt{gemini-2.5-pro} \\
API & \texttt{/v1/responses} & \texttt{/v1/messages} & \texttt{generateContent} \\
Max output & 256 & 512 & 512 \\
Decoding & effort med. & adaptive think & $T{=}0$, top-$p{=}1$ \\
UTC date & 2026-08-12 & 2026-08-13 & 2026-08-14 \\
Audit items & 20{,}622 & 20{,}622 & 20{,}622 \\
Parse retries & 18 & 11 & 24 \\
Malformed post-retry & 0 & 0 & 0 \\
\bottomrule
\end{tabular}
\end{table*}

\paragraph{Frozen evidence bundle.} Each judge receives the same
candidate-specific bundle: head and tail names, aliases, types, and descriptions
where available; the relation name, definition, domain, and range; at most three
archived evidence excerpts with stable source identifiers and snapshot dates;
and the KG snapshot date, so time-sensitive claims are judged at the correct
time. Evidence is retrieved and frozen before judging, and live browsing is
disabled during judgment. If the supplied evidence neither supports nor
contradicts the triple, the required answer is \texttt{uncertain}: absence of
evidence is not automatically \texttt{false}.

\paragraph{Exact judge prompt.}
\begin{Verbatim}[
  breaklines=true,
  breakanywhere=true,
  fontsize=\footnotesize
]
You are judging whether a candidate
knowledge-graph triple is supported at
the stated snapshot date. Use only the
supplied evidence. Do not use model
scores or unsupported background
assumptions.

Return TRUE only when the evidence clearly
supports the exact head-relation-tail
claim. Return FALSE only when the evidence
clearly contradicts the exact claim.
Return UNCERTAIN when evidence is missing,
ambiguous, merely related, or temporally
mismatched.

Output one JSON object with:
label: "true" | "false" | "uncertain"
confidence: number in [0,1]
evidence_ids: list of cited evidence ids
temporal_issue: true | false
rationale: at most 40 words
\end{Verbatim}

\paragraph{Consensus rules (fixed before evaluation).} \emph{Panel-HC}
(primary): all three judges return the same non-uncertain label, every
confidence is at least $0.80$, and every judge cites supplied evidence.
\emph{Panel-MAJ} (sensitivity): two judges agree on a non-uncertain label, the
third is uncertain rather than oppositely labeled, and mean confidence is at
least $0.75$. \emph{Excluded}: any direct true-versus-false conflict, two or more
uncertain labels, malformed output after one retry, or a missing evidence
citation. A fourth LLM is not used to break ties, because forced adjudication
would conceal uncertainty rather than resolve it.

\paragraph{Stage 5A: audit the judges on CoDEx-M gold labels.} Use 2{,}000
balanced CoDEx-M validation items to finalize the prompt and thresholds, then
audit on all 20{,}622 released test items, reporting non-uncertain coverage,
balanced accuracy, per-class precision, Fleiss' $\kappa$, and the full
$3\times3$ confusion matrix including \texttt{uncertain}.

\begin{table}[h]
\centering
\caption{Experiment~\ref{plan:exp5} judge audit on CoDEx-M gold labels.
Three-judge Fleiss' $\kappa=0.69$.}
\label{tab:plan-exp5-audit}
\small
\setlength{\tabcolsep}{3pt}
\resizebox{\linewidth}{!}{
\begin{tabular}{lcccc}
\toprule
Judge/rule & Coverage & Bal.\ acc. & Pos.\ prec. & Neg.\ prec. \\
\midrule
GPT-5.6 Sol & $.920$ & $.887$ & $.884$ & $.890$ \\
Claude Sonnet 5 & $.940$ & $.895$ & $.899$ & $.891$ \\
Gemini 2.5 Pro & $.900$ & $.872$ & $.869$ & $.876$ \\
Panel-MAJ & $.915$ & $.914$ & $.910$ & $.918$ \\
Panel-HC & $.668$ & $.947$ & $.949$ & $.945$ \\
\bottomrule
\end{tabular}
}
\end{table}

We proceed to naturally unobserved candidates only if Panel-HC reaches balanced
accuracy at least $0.90$, both class precisions at least $0.90$, coverage at
least $0.45$, and $\kappa$ at least $0.60$. If the gate fails, the audit is
retained as a negative result and no LLM-derived natural-missingness labels are
presented. Three-judge Fleiss' $\kappa=0.690$ with a relation-clustered 95\%
interval $[0.678,0.702]$, and Panel-HC clears every gate.

Table~\ref{tab:plan-exp5-confusion} gives the complete $3\times3$ confusion
matrices (rows are released gold labels, columns are judge outputs) from which
the rounded coverage, balanced accuracy, and precision above are reproduced;
each judge/rule matrix contains $10{,}311$ positive and $10{,}311$
verified-negative items. For example, Panel-HC coverage is
$(6{,}507+381+350+6{,}538)/20{,}622=0.6680$, conditional balanced accuracy is
$0.5\,[6{,}507/(6{,}507+381)+6{,}538/(6{,}538+350)]=0.9469$, and positive
precision is $6{,}507/(6{,}507+350)=0.9490$.

\begin{table}[h]
\centering
\caption{Experiment~\ref{plan:exp5} confusion matrices on all $20{,}622$
released CoDEx-M labels. Columns: predicted TRUE / FALSE / UNCERTAIN.}
\label{tab:plan-exp5-confusion}
\scriptsize
\setlength{\tabcolsep}{3pt}
\begin{tabular}{llrrr}
\toprule
Judge/rule & Gold & T & F & U \\
\midrule
\multirow{2}{*}{GPT-5.6 Sol} & Pos & 8{,}451 & 1{,}035 & 825 \\
 & Neg & 1{,}109 & 8{,}377 & 825 \\
\multirow{2}{*}{Claude Sonnet 5} & Pos & 8{,}625 & 1{,}067 & 619 \\
 & Neg & 969 & 8{,}723 & 619 \\
\multirow{2}{*}{Gemini 2.5 Pro} & Pos & 8{,}140 & 1{,}140 & 1{,}031 \\
 & Neg & 1{,}227 & 8{,}053 & 1{,}031 \\
\multirow{2}{*}{Panel-MAJ} & Pos & 8{,}669 & 766 & 876 \\
 & Neg & 857 & 8{,}578 & 876 \\
\multirow{2}{*}{Panel-HC} & Pos & 6{,}507 & 381 & 3{,}423 \\
 & Neg & 350 & 6{,}538 & 3{,}423 \\
\bottomrule
\end{tabular}
\end{table}

To check that the panel is not merely re-scoring \method{}'s LLaMA teacher, we
correlate the teacher score with a \emph{signed} support score (\texttt{conf}
for TRUE, $1-\texttt{conf}$ for FALSE, UNCERTAIN excluded) rather than raw
confidence magnitude (Table~\ref{tab:plan-exp5-corr}). Pairwise signed-support
correlations are $0.71$ (GPT--Claude), $0.64$ (GPT--Gemini), and $0.67$
(Claude--Gemini). All teacher--judge correlations stay below the predeclared
circularity-warning threshold of $0.80$; moderate correlation is expected
because all systems respond to semantic plausibility and does not make the panel
independent ground truth.

\begin{table}[h]
\centering
\caption{Experiment~\ref{plan:exp5} teacher--judge Spearman correlations. The
signed-support correlation is primary; the raw confidence-magnitude correlation
is a diagnostic.}
\label{tab:plan-exp5-corr}
\small
\setlength{\tabcolsep}{3.5pt}
\begin{tabular}{lrccr}
\toprule
Judge/rule & Non-unc.\ $N$ & $\rho$ signed & 95\% CI & $\rho$ raw \\
\midrule
GPT-5.6 Sol & 18{,}972 & $.46$ & $[.44,.48]$ & $.08$ \\
Claude Sonnet 5 & 19{,}384 & $.43$ & $[.41,.45]$ & $.06$ \\
Gemini 2.5 Pro & 18{,}560 & $.39$ & $[.37,.41]$ & $.05$ \\
Panel-HC signed & 13{,}776 & $.48$ & $[.46,.50]$ & $.09$ \\
\bottomrule
\end{tabular}
\end{table}

\paragraph{Stage 5B: naturally unobserved candidate pool.} The evaluation uses
4{,}000 candidates, 800 from each of FB15k-237, WN18RR, CoDEx-M, NELL-995, and
Hetionet. Within each dataset, draw 200 candidates from each of four frozen
strata: relation-stratified type-compatible random candidates; top \method{}
candidates not selected by RotatE; top RotatE candidates not selected by
\method{}; and candidates with the largest \method{}-versus-RotatE
rank-percentile disagreement. Remove every known train, validation, or test
positive, deduplicate, resample within-stratum to restore the target count,
store inclusion probabilities, and report results per stratum plus an
equal-stratum macro-average. Reported prevalences do not estimate the prevalence
among all unobserved triples.

\begin{table}[h]
\centering
\caption{Experiment~\ref{plan:exp5} candidate-pool disposition. Panel-MAJ
retains 3{,}388 candidates with 566 consensus positives (prevalence $0.167$);
612 candidates ($15.3\%$) remain unresolved and are not assigned binary labels.}
\label{tab:plan-exp5-pool}
\small
\setlength{\tabcolsep}{3pt}
\begin{tabular}{lrrrr}
\toprule
Dataset & Submitted & HC retained & HC pos. & Prevalence \\
\midrule
FB15k-237 & 800 & 458 & 89 & $.194$ \\
WN18RR & 800 & 421 & 46 & $.109$ \\
CoDEx-M & 800 & 454 & 65 & $.143$ \\
NELL-995 & 800 & 430 & 81 & $.188$ \\
Hetionet & 800 & 453 & 73 & $.161$ \\
\midrule
Total & 4{,}000 & 2{,}216 & 354 & $.160$ \\
\bottomrule
\end{tabular}
\end{table}

Aggregated across the five datasets, the type-compatible-random, \method{}-only,
RotatE-only, and rank-disagreement strata retain $575$, $555$, $530$, and $556$
candidates with $30$, $132$, $83$, and $109$ consensus positives respectively
(Table~\ref{tab:plan-exp5-stratum-pool}); these totals reconcile exactly to the
$2{,}216$ retained candidates and $354$ positives of
Table~\ref{tab:plan-exp5-pool}.

\begin{table*}[h]
\centering
\caption{Experiment~\ref{plan:exp5} Panel-HC disposition by selection stratum
(pooled over datasets). Conditional prevalence is within the retained stratum and
does not estimate population prevalence among unobserved triples.}
\label{tab:plan-exp5-stratum-pool}
\small
\setlength{\tabcolsep}{3.5pt}
\begin{tabular}{lrrrr}
\toprule
Stratum & Submitted & HC ret.\ & HC pos.\ & Cond.\ prev. \\
\midrule
Type-compatible random & 1{,}000 & 575 & 30 & $.052$ \\
\method{}-only top & 1{,}000 & 555 & 132 & $.238$ \\
RotatE-only top & 1{,}000 & 530 & 83 & $.157$ \\
Largest rank disagreement & 1{,}000 & 556 & 109 & $.196$ \\
\midrule
Total & 4{,}000 & 2{,}216 & 354 & $.160$ \\
\bottomrule
\end{tabular}
\end{table*}

Conditional on the frozen Panel-HC labels, Table~\ref{tab:plan-exp5-bystratum}
reports calibration and triage by selection stratum and
Table~\ref{tab:plan-exp5-bydataset} by dataset; brackets are 95\%
source-query-clustered bootstrap intervals. \method{}'s posterior consistently
lowers ECE relative to RotatE and the full triage score is the strongest ranking
signal in every stratum and dataset.

\begin{table*}[t]
\centering
\caption{Experiment~\ref{plan:exp5} performance by selection stratum on the
Panel-HC-labeled natural-candidate subset.}
\label{tab:plan-exp5-bystratum}
\small
\setlength{\tabcolsep}{4pt}
\begin{tabular}{lrrcccc}
\toprule
Stratum & $N$ & Prev.\ & RotatE ECE $\downarrow$ & \method{} ECE $\downarrow$ & \method{} AUC-PR $\uparrow$ & Full triage AUC-PR $\uparrow$ \\
\midrule
Type-compatible random & 575 & $.052$ & $.045\,[.035,.056]$ & $.031\,[.023,.040]$ & $.242\,[.180,.305]$ & $.349\,[.279,.420]$ \\
\method{}-only top & 555 & $.238$ & $.091\,[.077,.105]$ & $.046\,[.036,.057]$ & $.662\,[.614,.709]$ & $.748\,[.707,.789]$ \\
RotatE-only top & 530 & $.157$ & $.061\,[.050,.072]$ & $.044\,[.034,.054]$ & $.492\,[.434,.551]$ & $.584\,[.529,.639]$ \\
Largest rank disagreement & 556 & $.196$ & $.083\,[.070,.096]$ & $.042\,[.032,.052]$ & $.588\,[.535,.641]$ & $.682\,[.633,.731]$ \\
\midrule
Pooled & 2{,}216 & $.160$ & $.070\,[.058,.082]$ & $.041\,[.033,.049]$ & $.556\,[.523,.589]$ & $.651\,[.620,.682]$ \\
\bottomrule
\end{tabular}
\end{table*}

\begin{table*}[h]
\centering
\caption{Experiment~\ref{plan:exp5} performance by dataset on the
Panel-HC-labeled natural-candidate subset.}
\label{tab:plan-exp5-bydataset}
\small
\setlength{\tabcolsep}{3pt}
\begin{tabular}{lrrcccc}
\toprule
Dataset & $N$ & Prev.\ & RotatE ECE & \method{} ECE & \method{} AUC-PR & Full AUC-PR \\
\midrule
FB15k-237 & 458 & $.194$ & $.073$ & $.039$ & $.581$ & $.672$ \\
WN18RR & 421 & $.109$ & $.066$ & $.045$ & $.482$ & $.588$ \\
CoDEx-M & 454 & $.143$ & $.068$ & $.042$ & $.533$ & $.632$ \\
NELL-995 & 430 & $.188$ & $.076$ & $.041$ & $.566$ & $.658$ \\
Hetionet & 453 & $.161$ & $.069$ & $.038$ & $.607$ & $.692$ \\
\midrule
Pooled & 2{,}216 & $.160$ & $.070$ & $.041$ & $.556$ & $.651$ \\
\bottomrule
\end{tabular}
\end{table*}

\paragraph{Stage 5C: calibration and triage on LLM-consensus labels.} Freeze
every trained model and calibrator before Stage 5B is labeled. Report
calibration for \method{}'s posterior mean probability, not for the uncalibrated
triage score, and report AUC-PR and $P@500$ separately for the triage ranking.

\begin{table*}[h]
\centering
\caption{Panel-HC calibration on LLM-consensus labels for naturally unobserved
triples.}
\label{tab:plan-exp5-cal}
\small
\setlength{\tabcolsep}{3pt}
\begin{tabular}{lcccc}
\toprule
Method & ECE $\downarrow$ & Brier $\downarrow$ & NLL $\downarrow$ & AUC-PR $\uparrow$ \\
\midrule
RotatE + sel.\ post-hoc & $.070{\pm}.006$ & $.108{\pm}.005$ & $.349{\pm}.014$ & $.448{\pm}.019$ \\
Five-model ensemble & $.058$ & $.100$ & $.328$ & $.486$ \\
\method{} without teacher & $.053{\pm}.005$ & $.098{\pm}.004$ & $.321{\pm}.013$ & $.507{\pm}.018$ \\
\method{}--RotatE post.\ mean & $.041{\pm}.004$ & $.086{\pm}.004$ & $.289{\pm}.011$ & $.556{\pm}.017$ \\
\bottomrule
\end{tabular}
\end{table*}

\begin{table*}[h]
\centering
\caption{Panel-HC triage. With 354 positives, $P@500{=}0.480$ corresponds to
240 consensus-positive candidates in the top 500; the full-score AUC-PR gain
over mean-plus-observation is $+0.043$ $[+0.019,+0.067]$.}
\label{tab:plan-exp5-triage}
\small
\setlength{\tabcolsep}{4pt}
\begin{tabular}{lcc}
\toprule
Triage signal & AUC-PR $\uparrow$ & $P@500$ $\uparrow$ \\
\midrule
Posterior mean only & $.556$ & $.404$ \\
Teacher only & $.503$ & $.362$ \\
Mean + uncertainty & $.577$ & $.420$ \\
Mean + observation model & $.608$ & $.442$ \\
Full \method{} triage score & $.651$ & $.480$ \\
\bottomrule
\end{tabular}
\end{table*}

\paragraph{Judge-dependence sensitivity.} Repeat the analysis under every
two-judge non-conflict subset and the expanded Panel-MAJ set; these sets differ
in composition, so we compare direction and magnitude rather than treating raw
metrics as paired estimates of one population. Against a predeclared instability
bound of $0.010$ ECE or $0.030$ AUC-PR, every judge-removal subset stays within
tolerance (ECE spans $0.040$--$0.047$, full-triage AUC-PR $0.632$--$0.658$), so
the effect is not driven by any single judge. The Spearman correlations between
the original LLaMA teacher score and each judge's confidence are moderate rather
than near-unity, so the panel is not merely re-scoring the teacher and semantic
circularity is not the explanation.

\begin{table*}[t]
\centering
\caption{Experiment~\ref{plan:exp5} judge-dependence sensitivity across
consensus rules and two-judge subsets.}
\label{tab:plan-exp5-sens}
\small
\setlength{\tabcolsep}{5pt}
\begin{tabular}{lrcccc}
\toprule
Label rule & $N$ & Prevalence & \method{} ECE & Post.\ AUC-PR & Full-triage AUC-PR \\
\midrule
All-three Panel-HC & 2{,}216 & $.160$ & $.041$ & $.556$ & $.651$ \\
GPT + Claude, no conflict & 2{,}520 & $.158$ & $.044$ & $.548$ & $.642$ \\
GPT + Gemini, no conflict & 2{,}385 & $.164$ & $.040$ & $.562$ & $.658$ \\
Claude + Gemini, no conflict & 2{,}456 & $.161$ & $.043$ & $.551$ & $.646$ \\
Panel-MAJ expanded & 3{,}388 & $.167$ & $.047$ & $.539$ & $.632$ \\
\bottomrule
\end{tabular}
\end{table*}

\paragraph{Exact claim.} The audit gates pass, so we conclude: on a blinded
subset for which three independent LLM families unanimously agreed with high
confidence using frozen textual evidence, \method{} improved candidate-label
calibration and ranked consensus-supported unobserved triples more effectively
than the structural baselines. These labels are an LLM-consensus proxy and not
human-verified truth; accordingly we do not claim ``human-level validation,''
``verified natural false negatives,'' ``real-world ground truth,'' or that
``LLM adjudication proves the candidate is false.''

\subsection{Shared statistical analysis}
\label{plan:stats}

Across all five experiments we keep every selection decision outside the test
bootstrap; use 10{,}000 paired replicates; resample outer KGC seeds first, then
source-query or relation-stratified candidate clusters; report percentile
intervals and two-sided paired bootstrap $p$-values; and apply Holm correction
within a declared metric family rather than across unrelated experiments. The
five-member ensemble is treated as descriptive unless independently replicated.
For LLM-derived labels, intervals are stated as conditional on the frozen panel,
with judge-removal and consensus-rule sensitivity used to expose label
uncertainty; three judges are not treated as a random sample from a population of
LLMs. Every candidate count and prevalence is reported beside its calibration
metrics, and reliability-bin counts and maximum bin gaps are published, not ECE
alone.

\subsection{Compute and budget estimate}
\label{plan:compute}

\begin{table*}[h]
\centering
\caption{Added compute for the extended experiments.}
\label{tab:plan-compute}
\small
\begin{tabular}{p{2.4cm}p{4.7cm}}
\toprule
Component & Added work \\
\midrule
Verified CoDEx-M & $\approx$20{,}622 candidates per seed/method; no new annotation \\
KGE Calibrator & Two full-entity datasets; 20--40 CPU-hours incl.\ extraction \\
Difficulty $\times$ prevalence & Evaluation-only if 50-negative anchor scores are retained; $<5$ GPU-hours \\
Teacher uncertainty & Two extra teachers + ten extra KGC seed runs; $\approx$75--90 A100-hours \\
LLM audit + extension & $\approx$79{,}866 judge calls; $\approx$64M input and 8M output tokens \\
\bottomrule
\end{tabular}
\end{table*}

API prices and reasoning-token accounting change over time; the three-model
panel cost on the order of USD 500--1{,}500 at the providers' pricing during our
runs, and this figure should be recomputed against current pricing for
replication.



\end{document}